\documentclass[lettersize,journal]{IEEEtran}
\usepackage{amsmath,amsfonts}
\usepackage{algorithmic}
\usepackage{algorithm}
\usepackage{array}
\usepackage{textcomp}
\usepackage{stfloats}
\usepackage{url}
\usepackage{verbatim}
\usepackage{graphicx}
\usepackage{cite}
\hyphenation{op-tical net-works semi-conduc-tor IEEE-Xplore}
\usepackage{comment}
\usepackage{multirow}
\usepackage{microtype}
\usepackage{graphicx}
\usepackage{subfigure}
\usepackage{booktabs} 
\usepackage{color}
\usepackage{hyperref}
\usepackage{bbm}

\usepackage{amsmath}
\usepackage{amssymb}
\usepackage{mathtools}
\usepackage{amsthm}
\usepackage{multicol}
\usepackage{multirow}

\theoremstyle{plain}
\newtheorem{theorem}{Theorem}[section]
\newtheorem{proposition}[theorem]{Proposition}
\newtheorem{lemma}[theorem]{Lemma}

\theoremstyle{definition}
\newtheorem{definition}[theorem]{Definition}
\newtheorem{assumption}[theorem]{Assumption}
\theoremstyle{remark}

\usepackage{lettrine}

\begin{document}

\title{Aggregation-aware MLP: An Unsupervised Approach for Graph Message-passing}

\author{Xuanting Xie, Bingheng Li, Erlin Pan, Zhao Kang, ~\IEEEmembership{Member,~IEEE,} Wenyu Chen \thanks{Corresponding author: Zhao Kang\\X. Xie, Z. Kang, W. Chen are with the School of Computer Science and Engineering, University of Electronic Science and Technology of China, Chengdu, China; B. Li is with Michigan State University; E. Pan is with Alibaba Group (e-mail: x624361380@outlook.com; 
libinghe@msu.edu; panerlin.pel@alibaba-inc.com; \{zkang, cwy\}@uestc.edu.cn.}}


\markboth{Journal of \LaTeX\ Class Files,~Vol.~14, No.~8, August~2021}%
{Shell \MakeLowercase{\textit{et al.}}: A Sample Article Using IEEEtran.cls for IEEE Journals}


\maketitle

\begin{abstract}
Graph Neural Networks (GNNs) have become a dominant approach to learning graph representations, primarily because of their message-passing mechanisms. However, GNNs typically adopt a fixed aggregator function—such as \textbf{Mean}, \textbf{Max}, or \textbf{Sum}—without principled reasoning behind the selection. This rigidity, especially in the presence of heterophily, often leads to poor, problem-dependent performance. Although some attempts address this by designing more sophisticated aggregation functions, these methods tend to rely heavily on labeled data, which is often scarce in real-world tasks. In this work, we propose a novel unsupervised framework, ``Aggregation-aware Multilayer Perceptron'' (AMLP), which shifts the paradigm from directly crafting aggregation functions to making MLP adaptive to aggregation. Our lightweight approach consists of two key steps: First, we utilize a graph reconstruction method that facilitates high-order grouping effects, and second, we employ a single-layer network to encode varying degrees of heterophily, thereby improving the capacity and applicability of the model. Extensive experiments on node clustering and classification demonstrate the superior performance of AMLP, highlighting its potential for diverse graph learning scenarios.
\end{abstract}

\begin{IEEEkeywords}
Graph Representation Learning, Adaptive MLP, Graph Clustering, Graph Heterophily
\end{IEEEkeywords}

\section{Introduction}
\IEEEPARstart{G}{raph} representation learning leverages Graph Neural Networks (GNNs) to enhance various tasks, showing promising results \cite{GCN,xiao2024comprehensive}. GNNs use a message-passing approach, gathering information from neighboring nodes to better differentiate between node classes by reducing intra-class variance and increasing inter-class separation \cite{zhang2024nie}. Their strong performance is often attributed to homophily, where the connected nodes belong to the same class \cite{CGC}. 
However, this assumption does not always apply, as heterophilic graphs—where connected nodes belong to different classes—are common. In these cases, GNNs that use fixed aggregation functions like \textbf{Mean}, \textbf{Max}, or \textbf{Sum} often create overly similar embeddings for different nodes, reducing their ability to distinguish between classes and leading to poor performance \cite{wang2023generalizing}. The use of fixed aggregation functions, without a principled selection, limits the flexibility of the model, underscoring the need for more adaptive methods to handle diverse graph structures effectively \cite{Raw-gnn,fang2025contributes}.

Several methods have been proposed to improve GNN performance on heterophilic graphs by adjusting the aggregation process. These approaches can be grouped into three types: (1) Label-guided aggregation: These methods adjust attention weights based on neighbors' labels, favoring nodes with the same label and reducing the influence of nodes with different labels \cite{he2022block,jin2021heterogeneous}. (2) Large receptive field aggregation: Expanding the receptive field allows the model to capture more homophilic neighborhoods, with learnable parameters to balance local and global information \cite{Global-hete, GPRGNN}. (3) Hybrid methods: These combine the first two approaches, extending non-local neighborhoods and improving message passing for both homophilic and heterophilic environments \cite{Raw-gnn,park2022deformable, yang2022graph}. 

Current aggregation-based GNNs depend heavily on labeled data, which can be expensive and time-consuming to collect. In supervised models, the aggregation process is guided by labels to align with the learned node features. However, in unsupervised learning, distinguishing between nodes becomes more difficult. Furthermore, the gap between training and test sets can lead to overfitting, where models perform well during training but fail on new data \cite{jiang2020identifying,zhang2020fairness}. These challenges point to the need for unsupervised approaches in GNNs to enhance adaptability and performance across different datasets.

 To tackle these challenges, we introduce a new approach that shifts the focus from directly learning the aggregation function to making node representations adaptable to aggregation. This allows the model to dynamically adjust to the graph structure, unlocking the full potential of multilayer perceptrons (MLPs). Unlike other unsupervised graph learning methods that use MLPs for dimension transformation, our ``Aggregation-aware MLP'' (AMLP) framework integrates the aggregation process directly into the MLPs. We also propose an aggregation-aware loss, which we empirically evaluate. The AMLP framework unifies both homophilic and heterophilic graphs through two main steps: graph reconstruction and aggregation-adaptive learning. Our main contributions are summarized as follows.
\begin{itemize}
    \item{\textit{New perspective.} To the best of our knowledge, this is the first work to shift the focus from designing the aggregation mechanism to making learned representations aggregation-adaptive. We introduce the idea that MLPs can be aggregation-aware.}
    \item{\textit{Novel algorithm.} We propose a novel, unsupervised aggregation learning framework that unifies homophilic and heterophilic graphs through graph reconstruction and aggregation-aware loss. Theoretical analysis supports its effectiveness.}
    \item{\textit{SOTA performance.} Extensive experiments on 12 homophilic and heterophilic graph datasets, comparing against 35 baselines, demonstrate the superiority of AMLP.}
\end{itemize}

\section{Related Work}
Heterophilic graphs pose a challenge for GNNs because their structure disrupts traditional aggregation mechanisms. To overcome this, spatial-based GNNs have developed various strategies to adjust edge weights during message passing. For example, FAGCN \cite{FAGCN} introduces self-gating attention, which adaptively combines low- and high-frequency signals to improve node representation. GGCN \cite{GGCN} uses structure- and feature-based edge corrections to send signed neighbor features under specific node degree constraints. RAW-GNN \cite{Raw-gnn} captures homophily through breadth-first random walks and heterophily through depth-first searches, replacing traditional neighborhoods with path-based ones and using Recurrent Neural Networks for aggregation. 

Several approaches also handle heterophilic edges by accounting for node differences or connecting distant similar nodes. GPRGNN \cite{GPRGNN} learns adaptive weights to extract information effectively, regardless of label patterns, enabling deeper networks without losing node feature discrimination. EvenNet \cite{EvenNet} introduces an even-polynomial graph filter in the spectral domain to generalize across different graph types. DeeperGCN \cite{deepergcn} proposes a generalized aggregation function and a deep residual GNN to improve learning.  Some recent methods modify edge signs or opt to skip message transmission altogether \cite{he2022block}. However, most of these approaches depend on a supervised paradigm, limiting their generalizability due to reliance on labeled data.

\begin{figure}[t]
		\centering
			\includegraphics[width=1.\linewidth]{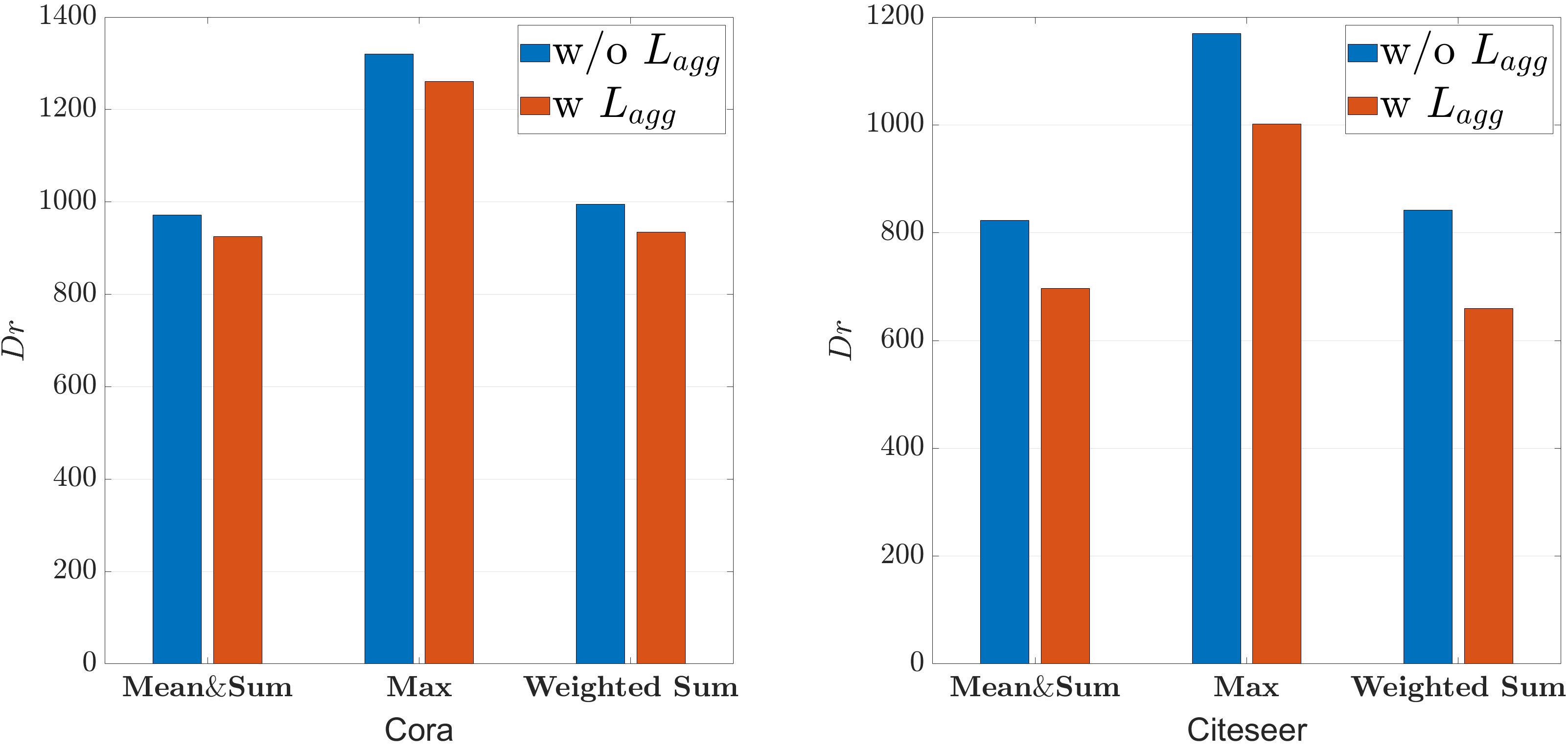}
			\includegraphics[width=1.\linewidth]{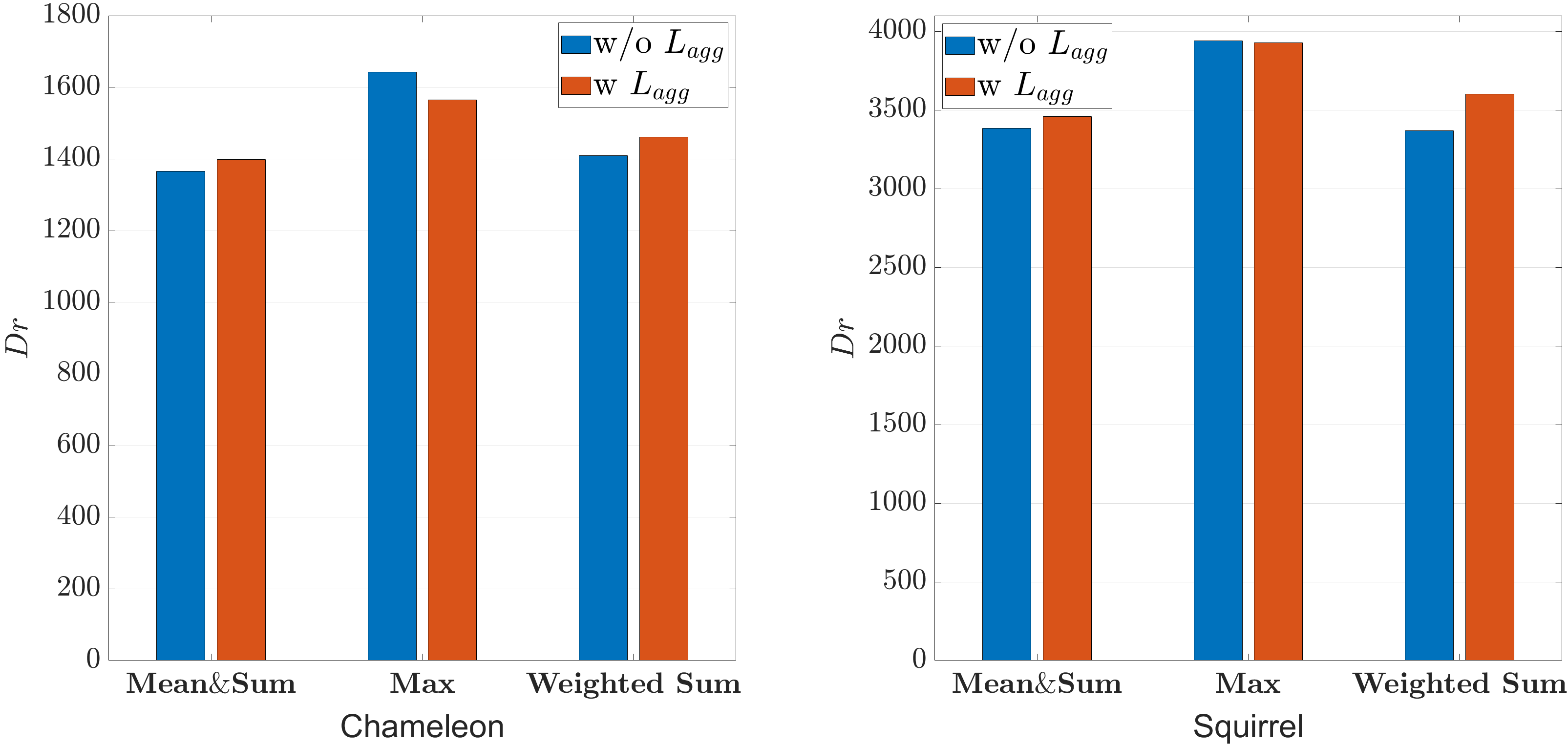}
		\caption{Results on Empirical Experiment 1.}
		\label{emp1}
\end{figure}

\begin{figure}[t]
		\centering
  \subfigure[``w/o $L_{agg}$'' on Cora.]{
			\includegraphics[width=.4\linewidth]{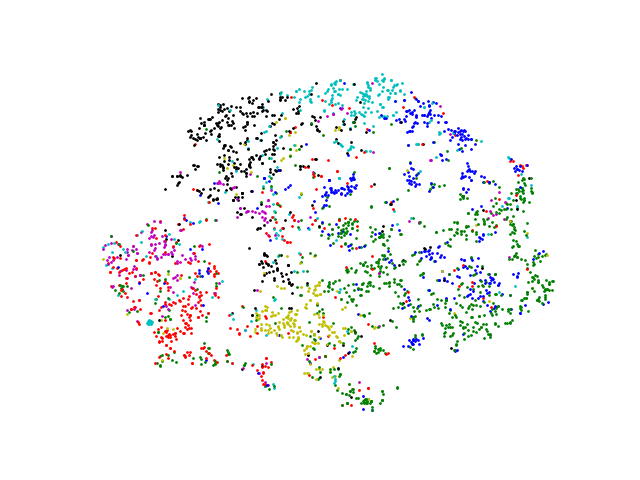}}
    \subfigure[``w $L_{agg}$'' on Cora.]{
			\includegraphics[width=.4\linewidth]{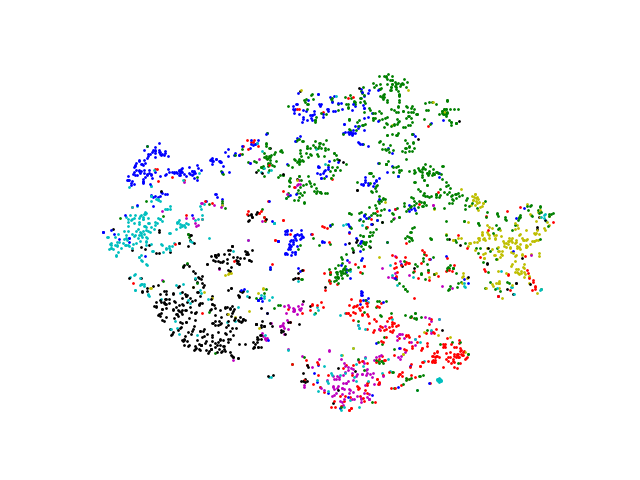}}
     \subfigure[``w/o $L_{agg}$'' on Citeseer.]{
   			\includegraphics[width=.4\linewidth]{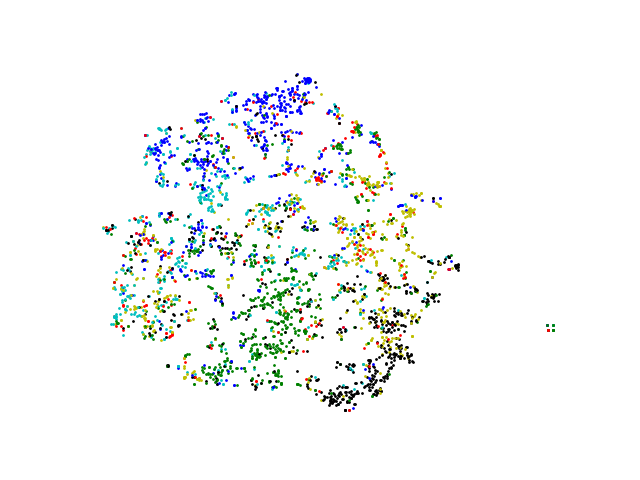}}
    \subfigure[``w $L_{agg}$'' on Citeseer.]{
			\includegraphics[width=.4\linewidth]{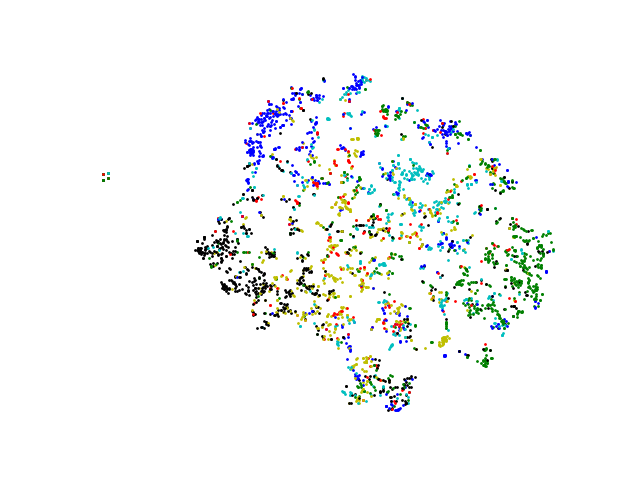}}
		\caption{Results visualization on Empirical Experiment 2.}
		\label{emp2}
\end{figure}

\section{Empirical Study}
\subsection{Preliminaries}
Assume that we have an undirected graph $\mathcal{G}=\lbrace \mathcal{V},E,X\rbrace$, where the collection of $N$ nodes is represented by $\mathcal{V}$ and the edge connecting nodes $i$ and $j$ is indicated by $e_{ij}\in E$. The feature matrix with $d$ dimensions is $X=\lbrace X_1,...,X_N\rbrace^{\top}\in \mathbb{R}^{N\times d}$. The graph structure is represented by the adjacency matrix $A \in \mathbb{R}^{N\times N}$. The diagonal degree matrix is denoted by $D_{i, i}=\sum_j A_{i, j}$. The normalized adjacency matrix is $\widetilde{A} = (D+I)^{-\frac{1}{2}}(A + I)(D+I)^{-\frac{1}{2}}$, and its related graph Laplacian is $L = I - A$.  $\hat{X}_{i} = \frac{X_i}{\left\|X_i\right\|}$. $A_{i}$ indicates the $i$-th row of $A$ and $\hat{A}_{i} = \frac{A_i}{\left\|A_i\right\|}$. $\hat{Y}$ indicates the normalized matrix of $Y$.

The message-passing-based GNNs update a node's representation $Y_v$ recursively by aggregating the representations of its neighbors. This process can be defined as:
\begin{equation}
Y_v =g\left(\left\{X_u: u \in \mathcal{N}(v)\right\} \cdot 
 W\right),
\end{equation}
where $Y_v$ is node $v$'s representation, $\mathcal{N}(v)$ is neighbor set of $v$, and $W$ is MLP. $g(\cdot)$ is an aggregation function that decides how to aggregate neighborhood representations. The choice of $g(\cdot)$ is important, which is always selected from \textbf{Mean}, \textbf{Max}, and \textbf{Sum} \cite{hamilton2017inductive}. For example, GCN \cite{GCN} adopts a \textbf{Weighted Sum} aggregation. The direct application of these aggregation functions introduces several limitations \cite{wang2023generalizing}. For \textbf{Mean} and \textbf{Sum}, the overall nonlinearity and expressiveness of GNNs are typically constrained, as deeper GNNs often encounter the over-smoothing problem due to their inherent message-passing mechanism. In the case of \textbf{Max}, it often fails to capture detailed nuances within the node representations of its neighborhood.

The unsupervised nature of our approach indicates that we cannot directly learn the proper aggregation by labels. We draw inspiration from the self-expressive model and express each node as an approximate linear combination of other data points based on adjacency \cite{MCGC}. Thus, we design the aggregation-aware loss as follows: $L_{agg} = \left\|AXW - XW\right\|_F^2$. By minimizing the discrepancy between the node representations before and after aggregation, $W$ is trained to adapt its parameters to suit different graph structures. Note that $A$ does not have the self-loop since $W$ learns nothing from it. Next, we conduct empirical experiments to see whether $W$ can capture structural disparity and be aggregation-adaptive.

\begin{figure*}[t]
\centering
\includegraphics[width=.8\textwidth]{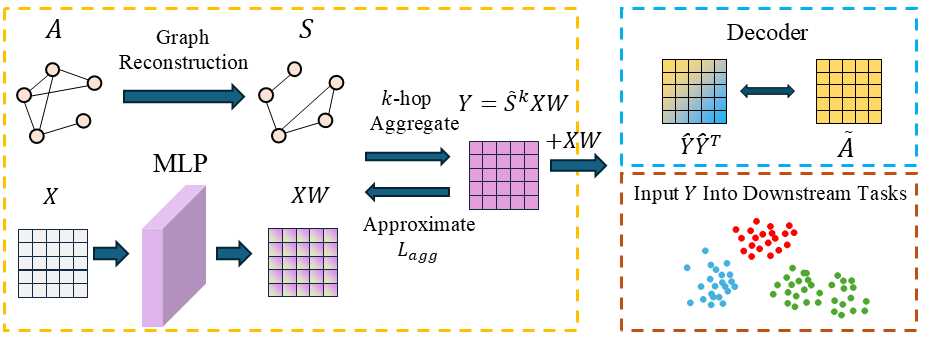} 
\caption{The framework of AMLP. It consists of two main steps: graph reconstruction to obtain $S$ and node aggregation from the $k$ hop neighbors. MLP is aware of the aggregation during training. An inner product decoder is applied to predict the original edges between nodes by the output $Y$.}
\label{all}
\end{figure*}

\subsection{Empirical Experiments}
\textbf{Empirical Experiment 1:} In this experiment, we test on a simple autoencoder network. We use a single-layer MLP and the decoding process to reconstruct $\widetilde{A}$ through minimizing $
L_{rec} = \frac{1}{N^2}\| \hat{Y}\hat{Y}^\top - \widetilde{A}\|_F^2.$

We use Dirichlet Energy ($Dr$) to measure the learned aggregation as follows:
\begin{equation}
Dr =\sum_{i, j \in \mathcal{N}} \widetilde{A}_{i j}\left\|\hat{Y}_{i}-\hat{Y}_{j}\right\|^{2}.
\end{equation}
$Dr$ measures the similarity of node representations, which serves as a quantitative measure of node aggregation. Previous work has shown that homophilic graphs prefer a small $Dr$ while heterophilic graphs do the opposite. Too large or too small $Dr$ can be considered as over-smoothing or over-sharpening. To see the effect of the aggregation-aware loss, we use the objective function $L = L_{rec}$ or $L = L_{rec} + \lambda L_{agg}$ to train the network, which is marked as ``w/o $L_{agg}$'' and ``w $L_{agg}$'', respectively. $\lambda$ is a trade-off parameter and is fixed to 0.1. We examine the widely applied aggregation functions $g(\cdot)$: \textbf{Mean}, \textbf{Max}, \textbf{Sum}, and \textbf{Weighted Sum} for both loss functions. We use two homophilic graphs: Cora and Citeseer, and two heterophilic graphs: Chameleon and Squirrel. 

The results are shown in Fig. \ref{emp1}. By adding $L_{agg}$ to train the model, $Dr$ becomes smaller on homophilic graphs in all cases, while it becomes larger on heterophilic graphs in most cases. 
The experimental results suggest that by incorporating $L_{agg}$ during training, the values of $Dr$ adjust according to the underlying graph structure. These adjustments align with previous research findings that more neighboring similarity is preferred for homophilic graphs, whereas more neighboring differences are favored for heterophilic graphs. Consequently, $L_{agg}$ helps prevent over-smoothing or over-sharpening, demonstrating the model's flexibility to different aggregation methods and graph structures.

\textbf{Empirical Experiment 2:} To see whether the aggregation-aware $W$ is beneficial for distinguishing node groups, we set $g(\cdot)$ as the widely applied \textbf{Weighted Sum} aggregation in GCN \cite{GCN} and visualize the learned node representations in \textbf{Empirical Experiment 1} through the t-SNE technique on Cora and Citeseer. 
The results are shown in Fig. \ref{emp2}. The representations obtained with $L_{agg}$ exhibit a compact and clear cluster distribution, which in turn facilitates downstream tasks.

\section{Methodology}
Based on the aforementioned insights, we propose the AMLP framework, as shown in Fig. \ref{all}. It comprises two main components. First, it reconstructs the graph structure, which is theoretically proven to have a high-order grouping effect. Second, it makes MLP aggregation-aware of global aggregation in training.

\subsection{Graph Reconstruction}
For aggregation-based GNNs, unrelated neighbors are thought to be detrimental since features of nodes from different classes would be wrongly combined, making nodes indistinguishable. Thus, the graph quality is crucial for representation. The node attribute complements graph structures with rich semantic information; however, they are not inherently
well-aligned. Therefore, we refine the original graph as follows:
\begin{equation}
\begin{aligned}
&S_{ij}=\left\{
        \begin{aligned}
			1,&     & \text { if} \quad   (\frac{X_{i}^\top \cdot X_{j}}{\|X_{i}\| \cdot \|X_{j}\|} \cdot \frac{A_{i}^\top \cdot A_{j}}{\|A_{i}\| \cdot\|A_{j}\|})^2 \geq \epsilon. \\
			0,&     &\text { otherwise} .
		\end{aligned}
		\right.\\
&i, j \in[1,2, \cdots N], \\
\label{Sequation}
\end{aligned}
\end{equation}
where $\epsilon$ is set to 0.001 or 0.05 to eliminate some noise. Consequently, the feature space and the topology space are consistent, providing reliable relation information between nodes. The corresponding refined normalized adjacency matrix is $\widetilde{S} = D^{-\frac{1}{2}}SD^{-\frac{1}{2}}$. Note that, graph reconstruction is also a common kind of fixed aggregation \cite{DGCN}. Besides, $\widetilde{S}$ doesn't have the self-loop, which is different from the traditional GNN-based methods. 

\subsection{Aggregation-aware MLP}
Most graph convolution conducts  neighborhood aggregation  as follows:
\begin{equation}
\begin{aligned}
& Y = \widetilde{S}^k XW,  \\
\end{aligned}
\end{equation}
where $W$ is MLP and $k$ is a hyper-parameter. $\widetilde{S}^k$ is the $k$-order graph filter. Traditionally, $W$ performs as a dimension transformer and $XW$ lacks structural information. $Y$ indicates the representations after aggregating information from neighbors that are $k$-hop away. 
Motivated by our empirical study, we propose to learn the aggregation-aware MLP through the aggregation-aware loss. We generalize the pre-defined $L_{agg}$ to the $k$-th layer:
\begin{equation}
L_{agg} = ||\widetilde{S}^kXW - XW||_F^2.
\end{equation}
By updating the learned representations in the $k$-th hop, the learned node representations can incorporate their structural information in a global sense. It is worth pointing out that the learning process has only one network, which leads to a parameter-efficient model.

Due to the lack of a self-loop, pushing the node embeddings close to input features encourages each node to produce an embedding that is predictable from its neighborhood context alone. A similar design can be seen in \cite{liu2024graph}, which pushes the linear combination of neighbor-aggregated features closer to the original input features. This promotes local consistency and drives the MLP to learn representations that are structurally aware—nodes whose features can be inferred from neighbors will naturally align, while outliers or structure-violating nodes will be highlighted. Besides, \cite{Global-hete} also pushes the aggregated node embeddings close to input features. It is mentioned that it can reduce noise and drive the linear representation so that nodes are close to their embeddings. Further, ResNet-based GNN \cite{li2019deepgcns} adds the original features via residual connections before aggregation to mitigate over-smoothing.

To compensate for the missing self-loop, the output of AMLP is added with raw information as follows:
\begin{equation}
Y = \widetilde{S}^k XW + XW.
\end{equation}
There are many different types of decoders, like reconstructing the attribute value, the graph structure, or both. Since our latent representation doesn't directly utilize the original structure information, we propose to use a straightforward inner product decoder to predict the edges between nodes:
\begin{equation}
L_{rec}=\frac{1}{N^2}\|\hat{Y}\hat{Y}^\top - \widetilde{A} \|_F^2.
\end{equation}
Finally, the total loss is formulated as:
\begin{equation}
L= L_{agg} + \lambda L_{rec},
\end{equation}
where $\lambda$ is the trade-off parameter. 
Our aggregation-ware loss can be integrated into any existing message-passing GNNs. 

\subsection{Theoretical Analysis}
We provide theoretical justification for our method: (1) the filtered features effectively encode both node attributes and topological structures, enabling a high-order grouping effect that extends traditional grouping effects by incorporating higher-order information; (2) the aggregation-aware loss function achieves a high-pass filtering effect, enhancing the model's ability to capture fine-grained structural details.


\begin{definition}
\textbf{(Grouping effect \cite{GEKDD})} There are two similar nodes $i$ and $j$ in terms of local topology and node features, i.e., $V_i \to V_j \Leftrightarrow \left( \| \hat{A}_i - \hat{A}_j \|_2 \to 0 \right) \wedge \left( \| \hat{X}_i - \hat{X}_j \|_2 \to 0 \right)$, the matrix $G$ is said to have a grouping effect if $V_i \to V_j \Rightarrow |G_{ip} - G_{jp}| \to 0, \forall 1 \leq p \leq N$.
\end{definition}

In graph learning, the grouping effect naturally arises as similar nodes aggregate into closer embeddings, capturing shared characteristics and improving representation learning. This phenomenon has been widely recognized as a fundamental principle for enhancing model performance \cite{CDC}. While traditional grouping effects primarily emphasize local similarities, real-world graphs often exhibit complex higher-order structures. To address this, we extend the concept by incorporating high-order proximities, enabling better alignment of embeddings beyond immediate neighbors, as described below.

\begin{theorem}
\textbf{(High-order grouping effect)} Let the distance between filtered nodes \(i\) and \(j\) be \( \| Y_i - Y_j \| \).
With probability at least $1-\delta$, the following upper bound holds:
\[
\| Y_i - Y_j \| \leq O \left( \sqrt{ \ln\left(\frac{2}{\delta} \cdot \left( 
\mathcal{M} + \mathcal{N} \right) \right) } \right)
\]
\[
\mathcal{M} = \|\hat{X}_i - \hat{X}_j\|^2 + \|\hat{A}_i - \hat{A}_j\|^2
\]
\[
\mathcal{N}=\sum_{m \in {[N] \setminus \{i, j\} }}\left\|\hat{X}_i^\top \hat{X}_m - \hat{X}_j^\top \hat{X}_m \right\| + \left\|\hat{A}_i^\top \hat{A}_m - \hat{A}_j^\top \hat{A}_m \right\|
\]

\label{the1}
 The filtered features \(Y_i\) and \(Y_j\) preserve both the attribute-based similarity, the topological similarity, and the high-order similarity between nodes \(i\) and \(j\).
\end{theorem}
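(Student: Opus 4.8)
The plan is to peel $Y_i-Y_j$ apart into a fixed ``filter-row'' difference acted on by the weight matrix, bound the filter part deterministically by the attribute, topological, and high-order quantities, and then pass to the high-probability statement through Gaussian concentration of $W$.

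\emph{Step 1: reduce to the rows of the filter.} Writing $P := I + \widetilde S^{\,k}$ so that $Y = PXW$, we get $Y_i-Y_j = (P_i - P_j)XW$, where $P_i$ is the $i$-th row of $P$. Put $v := (P_i-P_j)X \in \mathbb R^{d}$; then $Y_i - Y_j = vW$ and it suffices to (a) bound $\|v\|$ and (b) control $\|vW\|$ given $\|v\|$.

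\emph{Step 2: bound $\|v\|$ by $\mathcal M$ and $\mathcal N$.} Split $P_i-P_j = (e_i-e_j) + \big((\widetilde S^{\,k})_i - (\widetilde S^{\,k})_j\big)$. The first term contributes $\|(e_i-e_j)X\| = \|X_i-X_j\|$, which, after accounting for row normalization of $X$, is of order $\sqrt{\mathcal M}$. For the second term, use $(\widetilde S^{\,k})_i = \widetilde S_i\,\widetilde S^{\,k-1}$ to get $\|(\widetilde S^{\,k})_i - (\widetilde S^{\,k})_j\| \le \|\widetilde S\|_{\mathrm{op}}^{\,k-1}\,\|\widetilde S_i - \widetilde S_j\|$, reducing everything to $\|\widetilde S_i - \widetilde S_j\|$. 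Since $\widetilde S = D^{-1/2}SD^{-1/2}$ and the degree factors are bounded, this is controlled (up to constants and a $|D_{ii}^{-1/2}-D_{jj}^{-1/2}|$ term) by $\|S_i - S_j\|^2 = \#\{m : S_{im}\ne S_{jm}\}$. Finally, $S_{im}$ depends only on $\psi(i,m):=\hat X_i^\top\hat X_m\cdot\hat A_i^\top\hat A_m$, and since $|\hat X_i^\top\hat X_m|,|\hat A_i^\top\hat A_m|\le 1$,
\[
|\psi(i,m)-\psi(j,m)| \;\le\; \big|\hat X_i^\top\hat X_m-\hat X_j^\top\hat X_m\big| + \big|\hat A_i^\top\hat A_m-\hat A_j^\top\hat A_m\big|,
\]
so summing the coordinatewise estimates over $m\in[N]\setminus\{i,j\}$ reproduces exactly $\mathcal N$, while the $m\in\{i,j\}$ contributions are $(\hat X_i-\hat X_j)^\top\hat X_i$-type terms bounded by $\sqrt{\mathcal M}$. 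This is the ``high-order'' content: a row of $\widetilde S$ is the vector of similarities of a node to \emph{all} nodes, so the filter merges $i$ and $j$ precisely when their whole similarity profiles --- a second-order quantity --- agree. The net output is a deterministic bound $\|v\|^2 \le C(\mathcal M + \mathcal N)$, with $C$ depending on $k$, $\|\widetilde S\|_{\mathrm{op}}$, $\|X\|_{\mathrm{op}}$, the degree spread, and the thresholding margin discussed below.

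\emph{Step 3: concentration over $W$.} Modeling $W\in\mathbb R^{d\times d'}$ with i.i.d.\ (sub-)Gaussian entries of variance $\sigma^2$ (the analysis being at initialization), for the fixed vector $v$ we have $vW\sim\mathcal N(0,\sigma^2\|v\|^2 I_{d'})$, hence $\|vW\|^2/(\sigma^2\|v\|^2)\sim\chi^2_{d'}$. A standard $\chi^2$ tail bound (Laurent--Massart) then gives, with probability at least $1-\delta$,
\[
\|Y_i-Y_j\|^2 \;\le\; \sigma^2\|v\|^2\Big(d' + 2\sqrt{d'\ln(2/\delta)} + 2\ln(2/\delta)\Big),
\]
and substituting the bound on $\|v\|^2$ from Step 2 and absorbing $d',\sigma,C$ into the $O(\cdot)$ yields the stated bound of order $\sqrt{\ln\!\big(\tfrac{2}{\delta}(\mathcal M+\mathcal N)\big)}$ (the precise way $\mathcal M+\mathcal N$ sits relative to the logarithm is just a matter of which tail inequality and normalization one adopts).

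\emph{Main obstacle.} The delicate step is the passage in Step 2 from $|\psi(i,m)-\psi(j,m)|$ small to $|S_{im}-S_{jm}|$ small: the indicator $S_{im}=\mathbbm 1[\psi(i,m)^2\ge\epsilon]$ is discontinuous, so a single bit can flip under an arbitrarily small perturbation when $\psi(i,m)^2$ sits exactly on $\epsilon$. I would close this with a margin/genericity hypothesis --- every pair $(i,m)$ keeps $\psi(i,m)^2$ bounded away from $\epsilon$ --- under which $\#\{m:S_{im}\ne S_{jm}\}$ is genuinely controlled by (and, in the $V_i\to V_j$ limit of the Definition, eventually zero because of) the $\psi$-differences. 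Everything else --- the operator-norm manipulations and the $\chi^2$ tail --- is routine.
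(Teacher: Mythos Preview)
Your skeleton matches the paper's: write $Y_i-Y_j=R_{ij}W$ with $R_{ij}=(S^kX)_i-(S^kX)_j+X_i-X_j$, peel off one factor of $S$ to reduce to $\|S_i-S_j\|$, split the entries of $S_i-S_j$ into the $m\in\{i,j\}$ block (giving $\mathcal M$) and the $m\notin\{i,j\}$ block (giving $\mathcal N$), and finish with a concentration bound over the random $W$. The substantive differences are two. First, for the ``main obstacle'' you flag --- the discontinuity of the indicator --- the paper does \emph{not} introduce a margin hypothesis; instead it replaces the hard threshold by a sigmoid $f(x)=1/(1+e^{-\mathbf{k}(x-\epsilon)})$ and then uses the Lipschitz constant $\mathbf{k}/4$ of $f$ to pass directly from $|S_{im}-S_{jm}|$ to the $\psi$-type differences. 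This sidesteps any counting of flipped bits and gives a graded bound without extra assumptions, at the price of proving the theorem for the smoothed $S$ rather than the exact binary one. Your margin route is cleaner about what $S$ actually is, but it adds a hypothesis not present in the statement and needs a Markov-type step to turn ``few large $\psi$-differences'' into ``few disagreeing entries.'' Second, for the concentration over $W$ the paper assumes the entries of $W$ are i.i.d.\ uniform on $[-c^{-1/2},c^{-1/2}]$ and applies Hoeffding's inequality to get $P(\|rW\|\ge t)\le 2\exp(-dt^2/(2\|r\|^2))$, whereas you model $W$ as (sub-)Gaussian and invoke a $\chi^2$/Laurent--Massart tail. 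Either yields the same $\sqrt{\ln(2/\delta)\cdot\|R_{ij}\|}$ shape after inversion, so this is a cosmetic difference. In short: same architecture, but the paper's key device is the sigmoid relaxation rather than your margin assumption.
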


\begin{proof}
    \begin{lemma}[Hoeffding's Inequality]\label{Hoeffding's Inequality}
Let \( X_1, X_2, \dots, X_n \) be independent random variables such that \( X_i \in [a_i, b_i] \) almost surely for each \( i \). Define the sum \( S_n = \sum_{i=1}^n X_i \). Then, for any \( t > 0 \), the following inequality holds:
\[
P\left( S_n - \mathbb{E}[S_n] \geq t \right) \leq \exp\left( -\frac{2t^2}{\sum_{i=1}^n (b_i - a_i)^2} \right),
\]
and similarly,
\[
P\left( S_n - \mathbb{E}[S_n] \leq -t \right) \leq \exp\left( -\frac{2t^2}{\sum_{i=1}^n (b_i - a_i)^2} \right).
\]
\end{lemma}

\begin{assumption}\label{assumption1}
Each element of the matrix \(W \in \mathbb{R}^{d \times c}\) is sampled uniformly from the interval \(\left[-\frac{1}{\sqrt{c}}, \frac{1}{\sqrt{c}}\right]\). For any \(r \in \mathbb{R}^c\), it holds that \(\mathbb{E}[\|r\cdot W \|_p] = 0\). Furthermore, by Hoeffding's inequality:
\[
P\left( \|r W \|_p \geq t \right) \leq 2 \exp\left( -\frac{d t^2}{2 \|r\|_p^2} \right),
\]
\end{assumption}

\begin{lemma}[Lipschitz Continuity of the Sigmoid Function]\label{Lipschitz}
The Sigmoid function \( f(x) = \frac{1}{1 + e^{-\mathbf{k}(x - \epsilon)}} \) is Lipschitz continuous with a Lipschitz constant \( \mathbf{L} = \frac{\mathbf{k}}{4} \), i.e. 
\[
|f(x_1) - f(x_2)| \leq \frac{\mathbf{k}}{4} |x_1 - x_2|, \quad \forall x_1, x_2 \in \mathbb{R}.
\]
\end{lemma}

Eq. (\ref{Sequation}) refine the original graph and defines \(S_{ij}\) as a binary variable, which has a hard threshold : 
\[
S_{ij} =
\begin{cases}
1, & \text{if } \left( \frac{\mathbf{X}_i^\top \mathbf{X}_j}{\|\mathbf{X}_i\| \|\mathbf{X}_j\|} \cdot \frac{\mathbf{A}_i^\top \mathbf{A}_j}{\|\mathbf{A}_i\| \|\mathbf{A}_j\|} \right)^2 \geq \epsilon, \\
0, & \text{otherwise.}
\end{cases}
\]
In the proof, we can replace the binary decision metric with a continuous function \(f\), such that:
\[
S_{ij} \approx f\left(\left( \frac{\mathbf{X}_i^\top \mathbf{X}_j}{\|\mathbf{X}_i\| \|\mathbf{X}_j\|} \cdot \frac{\mathbf{A}_i^\top \mathbf{A}_j}{\|\mathbf{A}_i\| \|\mathbf{A}_j\|} \right)^2\right).
\]
The Sigmoid function is a smooth, continuous, and monotonic function, defined as:
\[
f(x) = \frac{1}{1 + e^{-\mathbf{k}(x - \epsilon)}}.
\]
Applying this approximation to the original formula:
\[
S_{ij} \approx \frac{1}{1 + e^{-\mathbf{k}\left(\left( \frac{\mathbf{X}_i^\top \mathbf{X}_j}{\|\mathbf{X}_i\| \|\mathbf{X}_j\|} \cdot \frac{\mathbf{A}_i^\top \mathbf{A}_j}{\|\mathbf{A}_i\| \|\mathbf{A}_j\|} \right)^2 - \epsilon\right)}}.
\]
Here \(\mathbf{k}\) controls the smoothness. As \(\mathbf{k} \to \infty\), the Sigmoid function approaches a hard threshold. \(\epsilon\) is the original threshold. In the following proof, we use the general and smoothed  $S_{ij}$ for analysis instead.



We first compute the distance between the filtered features \(Y^k_i\) and \(Y^k_j\) of nodes \(i\) and \(j\) after \(k\) layers of filtering, denoted as \(\| Y^k_i - Y^k_j \|\). This can be expressed as:
\[
\begin{aligned}
 \| Y^k_i - Y^k_j \| &= \left\| (S^k X W)_i - (S^k XW)_j + (XW)_i - (XW)_j\right\| \\
 & = \left\| [(S^k X)_i - (S^k X)_j + X_i - X_j] W\right\| \\
\end{aligned}
\]
Note \(R_{ij}\) = \((S^k X)_i - (S^k X)_j + X_i - X_j\) . Using Lemma \ref{Hoeffding's Inequality} and Assumption \ref{assumption1}, and we can get
\[
\| Y^k_i - Y^k_j \| = \left\| R_{i j} \cdot W\right\|  \leq \left\| \sqrt{-\frac{2 \|R_{ij}\|}{d} \ln\left(\frac{\delta}{2}\right)} \right\| 
 \]
By factoring out \((S_i - S_j)\), we obtain:
\[
R_{ij} = \left\| (S_i - S_j) \cdot S^{k-1}X + X_i - X_j\right\|
\]
Using the triangle inequality, we can upper-bound the distance as follows:

\[
R_{ij} \leq \|S_i - S_j\| \cdot \left\|S^{k-1}X \right\| + \left\|X_i - X_j\right\|\]
Since \(S_i - S_j\) reflects the difference in topological structure between nodes \(i\) and \(j\), we can have :
\begin{align*}
\|S_i - S_j\| &= \|\sum_m S_{im} - S_{jm}\| \\=  \| S_{ij} - S_{ii} \| + &\| S_{ij} - S_{jj} \| + \sum_{m \in {[N] \setminus \{i, j\} }} \| S_{im} - S_{jm} \|  
\end{align*}


Now, we analyze the two terms \(\| S_{ij} - S_{ii} \| + \| S_{ji} - S_{jj} \|\) and \(\sum\limits_{m \in {[N] \setminus \{i, j\} }} \| S_{im} - S_{jm}\|\)separately.

\textbf{1. Handling the Term \(\| S_{ij} - S_{ii} \| + \| S_{ji} - S_{jj} \|\)}

Using Lemma \ref{Lipschitz}, we can have
\[
\begin{aligned}
&\| S_{ij} - S_{ii} \| \\
&\leq \frac{1}{4}\left\|\frac{X_i^\top X_j}{\|X_i\| \|X_j\|} \cdot \frac{A_i^\top A_j}{\|A_i\| \|A_j\|}   - \frac{X_i^\top X_i}{\|X_i\| \|X_i\|} \cdot
\frac{A_i^\top A_i}{\|A_i\| \|A_i\|}\right\| \\
&=  \left\|\frac{X_i^\top X_j}{\|X_i\| \|X_j\|} \cdot \frac{A_i^\top A_j}{\|A_i\| \|A_j\|}   - 1\right\|
\end{aligned}
\]
This term can be expanded as:
\[
 \left\| \frac{X_i^\top X_j}{\|X_i\| \|X_j\|} \cdot \frac{A_i^\top A_j}{\|A_i\| \|A_j\|} 
- \frac{A_i^\top A_j}{\|A_i\| \|A_j\|} 
+  \frac{A_i^\top A_j}{\|A_i\| \|A_j\|} - 1 \right\|
\]
or 
\[ \left\| \frac{X_i^\top X_j}{\|X_i\| \|X_j\|} \cdot \frac{A_i^\top A_j}{\|A_i\| \|A_j\|} - \frac{X_i^\top X_j}{\|X_i\| \|X_j\|} +  \frac{X_i^\top X_j}{\|X_i\| \|X_j\|} - 1 \right\|
\]

Using the Cauchy-Schwarz inequality, we can upper-bound the above expression as:
\[
\begin{aligned}
&\| S_{ij} - S_{ii} \| + \| S_{ji} - S_{jj} \| \\
&\leq \frac{1}{2}\min \left\{ 
\left\| \frac{X_i^\top X_j}{\|X_i\| \|X_j\|} - 1 \right\| \cdot \left\| \frac{A_i^\top A_j}{\|A_i\| \|A_j\|} \right\| + \left\| \frac{A_i^\top A_j}{\|A_i\| \|A_j\|} - 1 \right\|, \right. \\
 &\quad \left. \left\| \frac{X_i^\top X_j}{\|X_i\| \|X_j\|} - 1 \right\| 
+ \left\| \frac{A_i^\top A_j}{\|A_i\| \|A_j\|} - 1 \right\| \cdot \left\| \frac{X_i^\top X_j}{\|X_i\| \|X_j\|} \right\| \right\}.
\end{aligned}
\]

i.e.
\[
\begin{aligned}
&\| S_{ij} - S_{ii} \| + \| S_{ji} - S_{jj} \| \\
&\leq \min \left\{ \|\hat{A}_i - \hat{A}_j\|^2 + \|\hat{X}_i - \hat{X}_j\|^2 \cdot 
\left\| \frac{A_i^\top A_j}{\|A_i\| \|A_j\|} \right\|, \right. \\
 &\quad \left. \|\hat{X}_i - \hat{X}_j\|^2 + \|\hat{A}_i - \hat{A}_j\|^2 \cdot \frac{X_i^\top X_j}{\|X_i\| \|X_j\|} \right\}.
\end{aligned}
\]
where $\hat{X} = \frac{X}{||X||}$ and $\hat{A} = \frac{A}{||A||}$.
Thus, we arrive at:
\[
\| S_{ij} - S_{ii} \| + \| S_{ji} - S_{jj} \|  \leq O \left( \|\hat{X}_i - \hat{X}_j\|^2 + \|\hat{A}_i - \hat{A}_j\|^2 \right)
\]


\textbf{2. Handling the Neighborhood Difference Term}

For the second term involving the neighborhoods of nodes \(i\) and \(j\), we have:
\[
\begin{aligned}
&\sum_{m \in {[N] \setminus \{i, j\} }} \| S_{im} - S_{jm} \| \\
&\leq \sum_{m \in {[N] \setminus \{i, j\} }} \frac{1}{4} \| \frac{X_i^\top X_m}{\|X_i\| \|X_m\|} \cdot \frac{A_i^\top A_m}{\|A_i\| \|A_m\|}\\
&\hspace{1cm}- \frac{X_j^\top X_m}{\|X_j\| \|X_m\|} \cdot \frac{A_j^\top A_m}{\|A_j\| \|A_m\|} \| \\
&=\sum_{m \in {[N] \setminus \{i, j\} }}\|\hat{X}_i^\top\hat{X}_m \cdot \hat{A}_i^\top\hat{A}_m -
\hat{X}_i^\top\hat{X}_m \cdot \hat{A}_j^\top\hat{A}_m \\
&\hspace{1cm}+ \hat{X}_i^\top\hat{X}_m \cdot \hat{A}_j^\top\hat{A}_m - \hat{X}_j^\top\hat{X}_m \cdot \hat{A}_j^\top\hat{A}_m \| \\
&\leq \sum_{m \in {[N] \setminus \{i, j\} }}\min\{\left\|\hat{X}_i^\top\hat{X}_m\right\| \cdot \left\|\hat{A}_i^\top\hat{A}_m - \hat{A}_j^\top\hat{A}_m \right\| \\
&+ \left\|\hat{A}_j^\top\hat{A}_m\right\|\cdot\|\hat{X}_i^\top\hat{X}_m - \hat{X}_j^\top\hat{X}_m ||, \left\|\hat{X}_j^\top\hat{X}_m\right\| \cdot \left\|\hat{A}_i^\top\hat{A}_m - \hat{A}_j^\top\hat{A}_m 
\right\|\\
&+ \left\|\hat{A}_i^\top\hat{A}_m\right\|\cdot\|\hat{X}_i^\top\hat{X}_m - \hat{X}_j^\top\hat{X}_m ||\}\\
&\leq O( \sum_{m \in {[N] \setminus \{i, j\} }}\left\|\hat{A}_i^\top\hat{A}_m - \hat{A}_j^\top\hat{A}_m 
\right\| + \|\hat{X}_i^\top\hat{X}_m - \hat{X}_j^\top\hat{X}_m \|)
\end{aligned}
\]
The difference in neighborhoods directly contributes to the overall distance.

\textbf{3. Combining the Results}

By combining all the terms, we obtain the final conclusion:

\[
\| Y_i - Y_j \| \leq O \left( \sqrt{ \ln\left(\frac{2}{\delta} \cdot \left( 
\mathcal{M} + \mathcal{N} \right) \right) } \right)
\]
\[
\mathcal{M} = \|\hat{X}_i - \hat{X}_j\|^2 + \|\hat{A}_i - \hat{A}_j\|^2
\]
\[
\mathcal{N}= \sum_{m \in {[N] \setminus \{i, j\} }}\left\|\hat{X}_i^\top \hat{X}_m - \hat{X}_j^\top \hat{X}_m \right\| + \left\|\hat{A}_i^\top \hat{A}_m - \hat{A}_j^\top \hat{A}_m \right\|.
\] \\
\(\boxed{\text{Q.E.D.}}\)
\end{proof}

Thus, if $\left( \| \hat{A}_i - \hat{A}_j \| \to 0 \right) \wedge \left( \| \hat{X}_i - \hat{X}_j \| \to 0 \right) \wedge \left(\left\|\hat{X}_i^\top \hat{X}_m - \hat{X}_j^\top \hat{X}_m \right\| \to 0\right) \wedge \left(\left\|\hat{A}_i^\top \hat{A}_m - \hat{A}_j^\top \hat{A}_m\right\| \to 0 \right)$,  then \(Y_i\) $\to$ \(Y_j\), which captures the high-order grouping effect. This implies that the filtered features \( Y \) preserve both high-order topological and attribute-based similarity between nodes.

\textbf{Comment.} High-order grouping effect expresses the following idea: The similarity between nodes is not just local, but should also include whether their interaction patterns with all other nodes are consistent. Take an example with four nodes: $A = [[0,1,0,1], [1,0,1,0], [0,1,0,1], [1,0,1,0]], X = [[1,1,-1],[0,1,0],[1,1,1], [1,0,0]]$. $\|X_1 - X_3\|>0$, which indicates that grouping effect fails to identify nodes 1 and 3 as one cluster. But they have a similar high-order relationship, i.e. $\|X_1X^{\top}_2-X_3X_2^{\top}\|+\|X_1X_4^{\top}-X_3X_4^{\top}\| = 0$ and $\|A_1A^{\top}_2-A_3A_2^{\top}\|+\|A_1A_4^{\top}-A_3A_4^{\top}\| = 0$, which indicates they are possibly from the same cluster with the high-order grouping effect.

\begin{proposition}
The loss $L_{agg}$ exhibits a high-pass filtering effect in the spectral domain, which allows it to effectively diffuse or push apart the neighbors of adjacent $k$-hop nodes.
\label{the2}
\end{proposition}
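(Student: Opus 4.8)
The plan is to read $L_{agg}$ as the energy of the embedding after it has passed through a fixed polynomial graph filter, identify that filter's spectral response, show the response has a high‑pass shape, and then translate this back to the node domain to obtain the claimed effect on $k$‑hop neighbors. First I would rewrite $L_{agg} = \|(\widetilde{S}^k - I)\,XW\|_F^2$ and use the factorization $I - \widetilde{S}^k = (I-\widetilde{S})\sum_{t=0}^{k-1}\widetilde{S}^{\,t} = \widetilde{L}\sum_{t=0}^{k-1}\widetilde{S}^{\,t}$, where $\widetilde{L} = I - \widetilde{S}$ is the Laplacian associated with $\widetilde{S}$; this already exhibits the operator inside $L_{agg}$ as a polynomial in $\widetilde{L}$ that annihilates the lowest‑frequency (DC) eigenspace. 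Since $\widetilde{S}$ is symmetric, I would diagonalize $\widetilde{S} = U\,\mathrm{diag}(\mu_1,\dots,\mu_N)\,U^\top$ with real eigenvalues $\mu_i$ and Laplacian frequencies $\lambda_i = 1-\mu_i \ge 0$, so that in the eigenbasis
\[ L_{agg} = \sum_{i=1}^{N} g(\mu_i)\,\|(U^\top XW)_i\|^2, \qquad g(\mu) = (\mu^k-1)^2 = ((1-\lambda)^k-1)^2 . \]
Equivalently, as a quadratic form in $Z = XW$, $L_{agg} = \mathrm{tr}(Z^\top (I-\widetilde{S}^k)^2 Z)$, a filter to be contrasted with the plain low‑pass aggregation operator $\widetilde{S}^k$ used in ordinary message passing.

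Next I would pin down the high‑pass shape of $g$ by an elementary sign analysis of $g'(\lambda) = -2k((1-\lambda)^k-1)(1-\lambda)^{k-1}$: (i) $g$ vanishes precisely at $\lambda = 0$, so the filter kills the smoothest mode; (ii) $g$ is strictly increasing on $(0,1)$; (iii) for odd $k$, $g$ is strictly increasing over the whole frequency range and is maximized at the highest frequency $\mu = -1$, so the response is monotone in frequency — the defining profile of a high‑pass filter. For even $k$ the response additionally vanishes at $\mu = -1$, giving a mid‑band shape; restricting to odd $k$, which is natural here since $\widetilde{S}$ carries no self‑loop, recovers a genuine high‑pass. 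Consequently $L_{agg}$ re‑weights the spectral content of $XW$ by a profile that is zero on the smooth mode and grows with frequency, i.e.\ the operator underlying $L_{agg}$ is a high‑pass graph filter.

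Then I would move to the node domain to justify ``diffusing / pushing apart the $k$‑hop neighbors.'' Writing $(I-\widetilde{S}^k)Z$ out entrywise and using that $(\widetilde{S}^k)_{ij}$ is supported exactly on pairs joined by a length‑$k$ walk — together with an explicit correction for the fact that $\widetilde{S}=D^{-1/2}SD^{-1/2}$ is symmetric but not row‑stochastic (a direct consequence of the missing self‑loop) — $L_{agg}$ becomes a $k$‑hop Dirichlet‑type energy $\sum_{i,j}(\widetilde{S}^k)_{ij}\|(XW)_i-(XW)_j\|^2$ plus a diagonal remainder. The key is that this energy is carried by the high‑pass operator $I-\widetilde{S}^k$ rather than by the smoothing operator $\widetilde{S}^k$ (or $\widetilde{L}$): because $\widetilde{S}^k$ strictly attenuates every eigencomponent except the (at most low‑dimensional) $\mu^k=1$ part, keeping $L_{agg}$ small forces a node's embedding to be only a strictly attenuated image of its $k$‑hop neighborhood content, not a copy of it; hence — unlike the low‑pass fixed point of $\widetilde{S}^k$, which collapses a neighborhood onto one embedding — a node surrounded by dissimilar $k$‑hop neighbors is driven to an embedding distinct from theirs. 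Combined with $L_{rec}$, which forbids the trivial all‑constant solution, this yields representations that separate $k$‑hop neighbors of different classes; a heterophilic‑star toy example (a center whose $k$‑hop neighbors lie in a different class) serves as an explicit witness.

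The hard part is precisely this last step: turning ``the filter is high‑pass in the spectrum'' into a rigorous node‑level ``pushes $k$‑hop neighbors apart.'' The obstacles are that $\widetilde{S}$ is not stochastic (so the Dirichlet identity holds only up to the diagonal remainder, which must be bounded or absorbed), that the parity of $k$ changes the response shape, that one must account for the extra $+XW$ in $Y = (\widetilde{S}^k+I)XW$ when passing from a statement about $Z$ to one about $Y$ (using that $\widetilde{S}^k+I$ is invertible on the high‑frequency eigenspaces), and that the separation genuinely emerges from the interaction of $L_{agg}$ with $L_{rec}$ rather than from $L_{agg}$ alone — so the cleanest rigorous version is a statement about which configurations $L_{agg}$ penalizes (constant‑per‑neighborhood embeddings are heavily penalized, edge‑varying ones are compatible with a small loss) rather than about an unconstrained minimizer. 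The spectral computations in the first two steps are otherwise routine.
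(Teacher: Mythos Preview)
Your argument is correct and substantially more careful than the paper's own proof. The paper's proof is two lines: it rewrites $L_{agg}=\mathrm{Tr}\big((XW)^\top(\widetilde{S}^k-I)^2(XW)\big)$ and then asserts, via a Dirichlet-type identity, that minimizing this trace amounts to maximizing $\sum_{u,j}\big[(\widetilde{S}^k-I)^2-I\big]_{uj}\,\|(XW)_u-(XW)_j\|$, and declares the conclusion. Your route is genuinely different in its first half: you diagonalize $\widetilde{S}$, compute the explicit spectral response $g(\mu)=(\mu^k-1)^2$, and verify by sign analysis of $g'$ that it kills the DC mode and increases toward high frequencies --- this is a direct spectral proof of the ``high-pass filtering effect in the spectral domain,'' something the paper asserts but never actually checks in the spectral domain. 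Your second half (the $k$-hop Dirichlet energy rewriting) then essentially reproduces the paper's node-domain step, but with the diagonal remainder, the parity of $k$, the non-stochasticity of $\widetilde{S}$, and the role of $L_{rec}$ in excluding the trivial minimizer all made explicit; the paper's version glosses over each of these. What your approach buys is an honest spectral argument plus rigor on the node-domain translation; what the paper's approach buys is brevity, at the cost of being closer to a heuristic than a proof.
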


\begin{proof}
\begin{equation}
\begin{aligned}
L_{agg} &=||\widetilde{S}^kXW - XW||_F^2\\ 
&= Tr(|(XW)^\top (\widetilde{S}^k - I)^2 XW|)\\
\min L_{agg} &= \min Tr(|(XW)^\top (\widetilde{S}^k - I)^2 XW|)\\ &= \max \sum_{u j} [(\widetilde{S}^k - I)^2-I]_{u j } ||(XW)_u-(XW)_j||\\
\end{aligned}
\end{equation}
Thus, $L_{agg}$ exhibits a high-pass filtering effect in the spectral domain, which allows it to effectively diffuse or push apart the neighbors of the adjacent $k$-hop nodes. \\
\(\boxed{\text{Q.E.D.}}\)
\end{proof}

Note that our aggregation typically acts as a low-pass filter, smoothing node representations by averaging information from neighboring nodes \cite{CGC}. This interplay between the low-pass effect of message passing and the high-pass effect of loss allows the model to capture both the smoothness of local node neighborhoods and the sharp structural or feature distinctions within the graph.
 

 \begin{table*}[t]
		\centering
	\caption{Statistics information of datasets.}
		\label{tab::datasets}%
    \resizebox{.6\textwidth}{!}{
    \begin{tabular}{ccccccc}
    \toprule
    \multicolumn{2}{c}{Graph Datasets} & Nodes & Dims. & Edges & Clusters & Homophily Ratio \\
    \midrule
    {Heterophilic Graphs} & Cornell & 183   & 1703  & 298   & 5     & 0.1220  \\
          & Texas & 183   & 1703  & 325   & 5      & 0.0614  \\
          & Washington  & 230   & 1703  & 786   & 5    & 0.1434 \\
          & Chameleon & 2277 & 2325  & 31371 & 5    & 0.2299 \\
          & Squirrel & 5201  & 2089  & 217073 & 5    & 0.2234  \\
          & Roman-empire & 22662  & 300 & 32927 & 18    & 0.0469  \\
    \midrule
    {Homophilic Graphs}& Cora  & 2708  & 1433  & 5429  & 7    & 0.8137  \\
          & Citeseer & 3327  & 3703  & 4732  & 6     & 0.7392  \\
          & Pubmed & 19717 &500 &44327 &3 &0.8024 \\
          & UAT   & 1190  & 239  & 13599 & 4     & 0.6978 \\
          & AMAP  & 7650  & 745   & 119081 & 8    & 0.8272 \\
          & Ogbn-arXiv &169,343 &128 &1,166,243 &40 &0.6778 \\
    \bottomrule
    \end{tabular}}%
	\end{table*}

\section{Experiments On Clustering}
We experimentally evaluate the performance of AMLP in clustering tasks using real-world benchmark datasets. The clustering outcomes are obtained by applying K-means on $\hat{Y}$.

\subsection{Datasets}
 To fully evaluate the proposed AMLP, we select both homophilic and heterophilic datasets that are commonly used. For heterophilic graphs, six datasets are selected: Chameleon and Squirrel are Wikipedia-sourced page-page networks on specific topics \cite{rozemberczki2021multi}, Cornell, Texas, Washington\footnote{http://www.cs.cmu.edu/afs/cs.cmu.edu/project/theo-11/www/wwkb/}, and Roman-Empire \cite{Roman}. Chameleon and Squirrel are Wikipedia-sourced page-page networks on specific topics; Cornell, Texas, and Washington are the webgraphs from several university computer science departments; Roman-Empire is derived from one of the longest English Wikipedia articles. For homophilic graphs, six datasets are used: Cora, CiteSeer, and Pubmed are the citation networks of scientific publications \cite{GCN}, Amazon Photo (AMAP) is extracted from Amazon co-purchase graph \cite{DCRN}, and USA Air-Traffic (UAT) is collected from the Bureau of Transportation Statistics \cite{EAT}. We also add a large dataset Ogbn-arXiv, which is a paper citation network of ARXIV papers \cite{hu2020open}. The calculation of the homophily ratio follows \cite{Geom-GCN}, with larger values indicating higher homophily. Dataset statistics are in Table \ref{tab::datasets}. According to \cite{luan2024heterophily}, Cornell, Wisconsin, Texas, and Roman-Empire are particularly challenging for GNN models.

\subsection{Clustering Baselines}
The unsupervised nature of existing clustering methods limits the ability to learn adaptive aggregation schemes, as they typically rely on fixed weighted sum aggregation functions. To highlight the effectiveness of AMLP, we compare it with 26 baseline methods from six categories: 1) traditional GNN-based methods, such as DAEGC \cite{DAEGC}, MSGA \cite{MSGA}, SSGC \cite{S2GC}, GMM \cite{CDRS}, RWR \cite{RWR}, ARVGA \cite{ARVGA}, DMGNC \cite{DMGNC}, DyFSS \cite{DyFSS}; 2) contrastive learning-based methods, such as MVGRL \cite{MVGRL}, SDCN \cite{SDCN}, DFCN \cite{DFCN}, DCRN \cite{DCRN}, SCGC \cite{SCGC}, and CCGC \cite{CCGC}, designing new data augmentations to enhance the learned representations; 3) the advanced clustering approach AGE \cite{AGE}, which uses adaptive encoders and Laplacian smoothing filters to produce representations that are suitable for clustering; 4) shallow methods that utilize a fixed graph filter to smooth or sharpen the raw features, including MCGC \cite{MCGC}, FGC \cite{FGC} and RGSL \cite{RGSL}. Specifically, RGSL designs a novel $\alpha$-norm to tackle heterophily; (5) recent methods that unify homophily and heterophily, including SELENE \cite{SELENE}, CGC \cite{CGC}, and DGCN \cite{DGCN}. SELENE distinguishes between r-ego networks based on node characteristics and structural information independently using a dual-channel feature embedding method. CGC uses a hyper-parameter to combine both high- and low-pass filters to obtain better representations before graph structure learning for clustering. DGCN reconstructs homophilic and heterophilic graphs separately. Effective graph reconstruction is always expensive, as DGCN requires $O(N^4)$ complexity. (6) Scalable methods. BGRL \cite{BGRL}, ProGCL \cite{progcl}, S$^3$GC \cite{S3GC}, and Dink-net \cite{Dink-net}. They are designed to handle large-scale graphs.

\begin{table*}[htbp]
		\centering
  \caption{Clustering results on heterophilic graphs. The best results are marked in \textcolor[rgb]{ 1,  0,  0}{\textbf{red}}. The runner-up is marked in \textbf{bold}.}
		\label{Reheter}%
    \resizebox{.75\textwidth}{!}{
    \begin{tabular}{ccccccccccccc}
    \toprule
    \multirow{2}[4]{*}{Methods} & \multicolumn{2}{c}{Cornell} & \multicolumn{2}{c}{Texas} & \multicolumn{2}{c}{Washington} & \multicolumn{2}{c}{Chameleon} & \multicolumn{2}{c}{Squirrel}  & \multicolumn{2}{c}{Roman-empire}\\
\cmidrule{2-13}          & ACC   & NMI   & ACC   & NMI   & ACC   & NMI   & ACC   & NMI   & ACC   & NMI   & ACC   & NMI \\
    \midrule
    DAEGC & 42.56  & 12.37 &45.99 &11.25 & 46.96  &17.03   & 32.06  & 6.45   & 25.55  & 2.36 &21.23 &12.67\\
    MSGA  & 50.77  & 14.05 &57.22 &12.13 & 49.38   & 6.38   & 27.98   &6.21   & 27.42  & 4.31 &19.31 &12.25\\
    FGC   & 44.10  & 8.60   &53.48 &5.16 & 57.39  & 21.38   & 36.50 &11.25   & 25.11  & 1.32 &14.46 &4.86 \\
    GMM  & 58.86 & 9.26   &58.29 &13.06  & 60.86  & 20.56 & 34.91   & 7.89   &29.76   &5.15 & 21.90   & 13.57 \\
    RWR  & 58.29 & 11.35   &57.22 &13.87 & 63.91  & 23.13 & 33.27   & 8.03   & 29.96   & 5.69 & 22.73   & 14.74\\
    ARVGA  & 56.23  & 9.35   &59.89 &16.37 & 60.87  & 16.19 & 37.33   & 9.77   & 25.32   & 2.88 & 22.89   & 15.25\\
    DCRN &51.32 &9.05 &57.74 &19.86 &55.65 &14.15 &34.52 &9.11 &30.69 &6.84 &32.57 &29.50 \\
    SELENE &57.96&17.32 &65.23&25.40 &-&- &\textbf{38.97}&\textbf{20.63} &-&- &-&-\\
    CGC &44.62 &14.11 &61.50 &21.48 &63.20&25.94 &36.43&11.59 &27.23 &2.98 &30.16 &27.25\\
    DGCN & \textbf{62.29}  &\textbf{29.93} & \textbf{72.68} &33.67 & \textbf{69.13} &28.22 &36.14  &11.23 &\textbf{31.34} &7.24 & 33.42   &\textbf{31.44} \\
    RGSL &57.44 &28.95 &72.19 &\textbf{37.86} &66.09 &\textbf{29.79} &38.52 &12.79 &30.74 &\textbf{8.74} &\textbf{34.57} &31.23 \\

    AMLP &\textcolor[rgb]{ 1,  0,  0}{\textbf{66.67}}&\textcolor[rgb]{ 1,  0,  0}{\textbf{33.65}} &\textcolor[rgb]{ 1,  0,  0}{\textbf{74.32}}&\textcolor[rgb]{ 1,  0,  0}{\textbf{42.51}} &\textcolor[rgb]{ 1,  0,  0}{\textbf{74.35}} &\textcolor[rgb]{ 1,  0,  0}{\textbf{46.74}} &\textcolor[rgb]{ 1,  0,  0}{\textbf{43.21}}&\textcolor[rgb]{ 1,  0,  0}{\textbf{23.17}}&\textcolor[rgb]{ 1,  0,  0}{\textbf{33.40}}&\textcolor[rgb]{ 1,  0,  0}{\textbf{8.87}} &\textcolor[rgb]{ 1,  0,  0}{\textbf{36.04}}&\textcolor[rgb]{ 1,  0,  0}{\textbf{35.07}}\\

    \bottomrule
    \end{tabular}}%
	\end{table*}%

 \begin{table*}[!htbp]
		\centering
  \caption{Clustering results on homophilic graphs.}
		\label{Rehomo}%
    \resizebox{.7\textwidth}{!}{
        \begin{tabular}{cccccccccccc}
    \toprule
    \multirow{2}[4]{*}{Methods} & \multicolumn{2}{c}{Cora} & \multicolumn{2}{c}{Citeseer}  & \multicolumn{2}{c}{Pubmed} & \multicolumn{2}{c}{UAT} & \multicolumn{2}{c}{AMAP}  \\
\cmidrule{2-11}          & ACC   & NMI   & ACC   & NMI   & ACC   & NMI   & {ACC} & NMI   & ACC   & NMI   &  \\
    \midrule
    DFCN  & 36.33  & 19.36 & 69.50  & 43.9  &-&-& 33.61  & 26.49  & 76.88 & \textbf{69.21}   \\
    SSGC  & 69.60  & 54.71 & 69.11  & 42.87 &-&- & 36.74  & 8.04 & {60.23} & 60.37  \\
    MVGRL & 70.47  & 55.57 & 68.66  & 43.66 &-&-& 44.16  & 21.53 & {45.19} & 36.89  \\
    SDCN  & 60.24  & 50.04 & 65.96  & 38.71 &65.78&29.47& 52.25  & 21.61 & {53.44} & 44.85   \\
    AGE   & 73.50  & \textbf{57.58} & 70.39  & \textbf{44.92} &-&-& 52.37  & 23.64 & {75.98} & -      \\
    MCGC  & 42.85  & 24.11 & 64.76  & 39.11 &66.95&32.45& 41.93  & 16.64 & {71.64} & 61.54   \\
    FGC   & 72.90  & 56.12 & 69.01  & 44.02&70.01&31.56 & 53.03  & 27.06 & {71.04} & 61.02      \\
    SCGC & 73.88 & 56.10 &71.02 &\textcolor[rgb]{ 1,  0,  0}{\textbf{45.25}} &67.73&28.65&\textbf{56.58} &28.07 &\textbf{77.48} &67.67   \\
    CCGC & 73.88 & 56.45 & 69.84 & 44.33 &68.06&30.92&56.34 &\textbf{28.15} &77.25 & 67.44   \\
    CGC & \textbf{75.15} & 56.90 &69.31 &43.61 &67.43&33.07&49.58&17.49 &73.02&63.26 \\
    DGCN & 72.19 & 56.04 & \textbf{71.27} & 44.13 &70.13&30.17& 52.27 & 23.54 & 76.07 & 66.13   \\
    DMGNC & 73.12 & 54.80 & \textbf{71.27} & 44.40 &\textbf{70.46}&\textbf{34.21}&- &- &- &- \\
    DyFSS &72.19 &55.49 &70.18 &44.80 &68.05 &26.87 &51.43 &25.52 &76.86 &67.78\\
    AMLP & \textcolor[rgb]{ 1,  0,  0}{\textbf{77.18}} & \textcolor[rgb]{ 1,  0,  0}{\textbf{57.61}} & \textcolor[rgb]{ 1,  0,  0}{\textbf{71.60}} & 44.48 &\textcolor[rgb]{ 1,  0,  0}{\textbf{74.64}} &\textcolor[rgb]{ 1,  0,  0}{\textbf{35.37}} & \textcolor[rgb]{ 1,  0,  0}{\textbf{58.40}} &\textcolor[rgb]{ 1,  0,  0}{\textbf{28.79}} & \textcolor[rgb]{ 1,  0,  0}{\textbf{79.80}} & \textcolor[rgb]{ 1,  0,  0}{\textbf{69.77}}  \\
    \bottomrule
    \end{tabular}}%
	\end{table*}%

\subsection{Experimental Setting}
For fairness, the experimental configurations for every dataset follow DGCN \cite{DGCN}, which finds the optimal solution using a grid search. The Adam optimizer is used to train the model for 200 epochs until convergence. We apply a single MLP with 100 or 500 dimensions. The learning rate is searched in \{1e-2,1e-3,1e-4\}. The hyper-parameter $k$ is set to 20 on Ogbn-arXiv and searched in \{1,2,3,5,7,8,9,10\} on other datasets. The trade-off parameter $\lambda$ is searched in \{1,1e-1,1e-2,1e-3\}. Performance is evaluated by two widely used clustering metrics: ACCuracy (ACC) and Normalized Mutual Information (NMI).

\subsection{Clustering Results}

Table \ref{Reheter} presents the clustering results in heterophilic graphs, showing that AMLP achieves the best performance in all cases, despite utilizing only a single-layer MLP. Notably, AMLP significantly outperforms recent methods addressing heterophily, such as SELENE, CGC, DGCN, and RSGL. Compared to DGCN and RGSL, the 2nd and 3rd methods, AMLP outperforms them by margins of up to 5.22\% and 16.95\% on the Washington dataset, and 4.69\% and 10.38\% on the Chameleon dataset, in terms of ACC and NMI, respectively. These results highlight AMLP's strong performance in heterophilic settings, due to its ability to learn an aggregation-aware MLP. For other methods, AMLP improves ACC and NMI by at least 7\% and 19\% on Cornell, and 14\% and 22\% on Texas, validating the importance of an unsupervised aggregation paradigm in graph clustering.

Table \ref{Rehomo} presents clustering results on homophilic graphs, showing that AMLP achieves state-of-the-art performance. While mainstream contrastive learning methods perform well, they rely on complex, rigid data augmentation designs that require domain expertise. In contrast, our simple method exceeds them in 9 out of 10 cases and achieves close results in the remaining one. Thus, our MLP can learn effective aggregation information on different graphs.

For shallow methods like MCGC, FGC, CGC, and RGSL, which apply a fixed filter to the features, AMLP outperforms them in most cases. Similarly, recent autoencoder-based methods such as DMGNC and DyFSS are also outperformed by our simple design in all cases. This highlights the importance of making representations aggregation-adaptive for graph clustering. While AMLP's NMI on Citeseer is less than 1\% behind the best-performing method, this discrepancy may be attributed to Citeseer's relatively small average number of neighbors, which limits the amount of information that can be aggregated. 


Fig. \ref{exlarge} shows the results in Ogbn-arXiv. Most baselines from Tables \ref{Reheter} and \ref{Rehomo} cannot run on this dataset due to their high complexity, so we compare AMLP with scalable methods. AMLP demonstrates dominant performance thanks to its aggregation-adaptive MLP. In terms of ACC, both AMLP and Dink-net demonstrate significant improvements over other methods (exceeding 7\%), with results that are closely comparable between the two. However, for NMI, AMLP outperforms Dink-net by a substantial margin, highlighting the effectiveness of our simple method in scaling to large datasets.

In summary, even when homophily is unknown, the aggregation-aware MLP makes node representations more discriminative, highlighting the potential of this unsupervised aggregation learning paradigm.

 \subsection{Parameter Analysis}

Our model has two parameters: the filter order $k$ and the trade-off parameter $\lambda$. To observe their impact, we first vary $k$ from 1 to 10 in Cora and Texas. Fig. \ref{parak} illustrates that both datasets perform best with a small $k$, with only a gradual decline as $k$ increases. This stability highlights our model's ability to balance information aggregation and noise suppression, adapting effectively to diverse graph structures.


To assess the impact of the trade-off parameter $\lambda$, we set its values to \{1e-4, 1e-3, 1e-2, 1e-1, 1, 10, 100\}. Fig. \ref{lams} illustrates how these values affect the model's performance. It can be observed that our model performs well across a wide range of scales, though extremely large or small values tend to result in poor outcomes. Additionally, both Cora and Texas datasets perform best when $\lambda$ is set to 0.1, suggesting that a balance between preserving aggregation information and reducing the focus on neighbor reconstruction yields better results. Specifically, setting $\lambda$ to 0.1 likely ensures a balanced emphasis on both the primary and secondary objectives, contributing to optimal performance.


\subsection{Ablation Study}
To evaluate the effectiveness of graph reconstruction, we replace the reconstructed graph with the original in the configuration labeled ``AMLP w $A$''. The results, shown in Table \ref{abla1}, indicate performance degradation in most cases. This suggests that our reconstructed graph, which incorporates a high-order grouping effect, is of higher quality than the original and improves downstream tasks. However, for Cora, the original graph $A$ outperforms the reconstructed one in NMI, indicating that Cora's original structure is already of high quality. But such perfect structures are rare in real-world data. Thus, our proposed reconstructing method can be generalized without labeled data.

We also assess the effectiveness of our loss sensitive to aggregation by removing $L_{agg}$ and only reconstructing $\widetilde{A}$ in the loss, denoted as ``AMLP w/o $L_{agg}$''. As seen in Table \ref{abla1}, this modification results in a significant performance drop in all cases, demonstrating that aggregation-aware loss is crucial. Therefore, our proposed method can make MLP adaptive to rich aggregation information in a simple yet effective way.

We conduct additional experiments to evaluate AMLP's efficiency in terms of time and GPU memory consumption. Using the Cora dataset, we compare AMLP with state-of-the-art methods, ensuring fairness by training all approaches for 200 epochs. As shown in Table \ref{time}, AMLP demonstrates at least twice the speed and the lowest GPU memory requirements among all tested methods. This superior efficiency can be attributed to AMLP's streamlined design, which consists of a single MLP and leverages graph reconstruction as a separate pre-processing step.

 \begin{figure}[t]
		\centering
			\includegraphics[width=.6\linewidth]{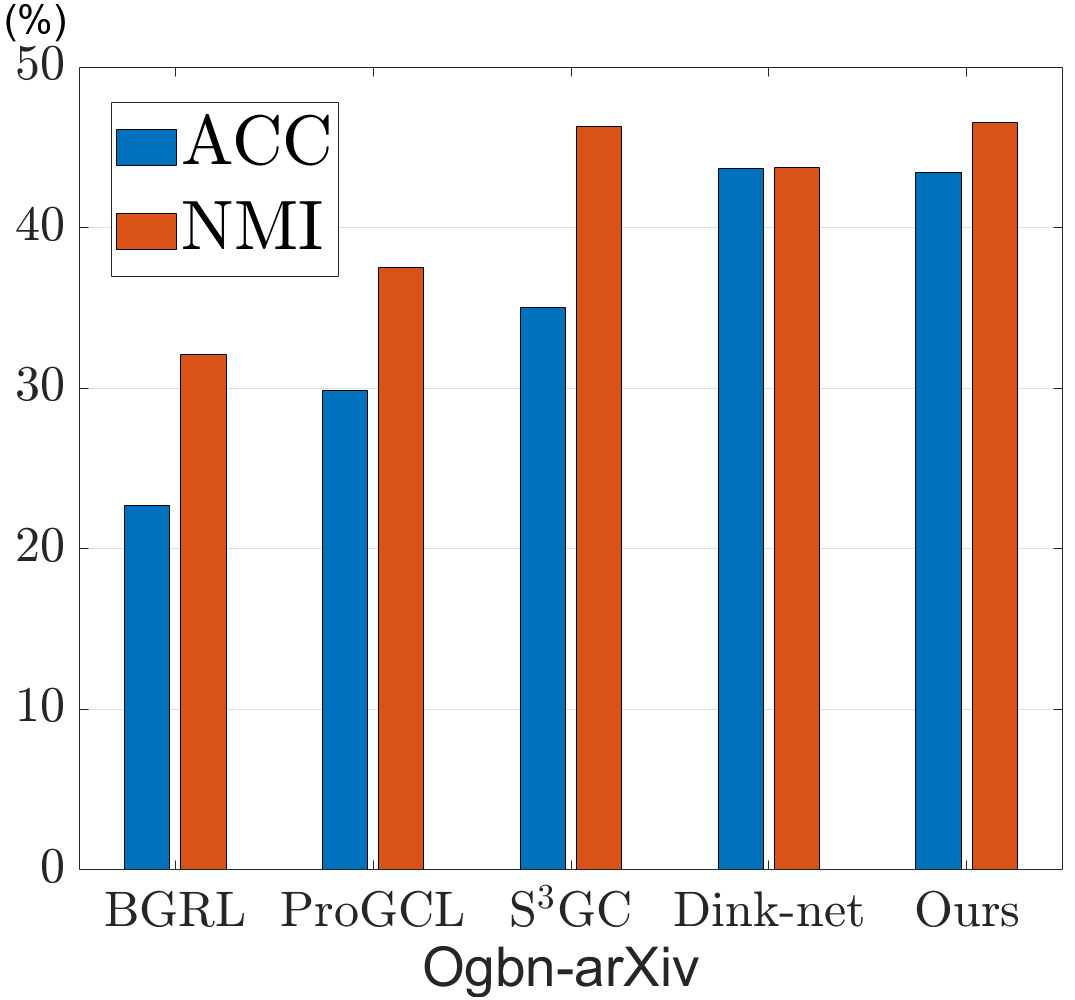}
		\caption{Results on large-scale graph.}
		\label{exlarge}
\end{figure}

\begin{figure}[t]
\centering
\includegraphics[width=0.5\textwidth]{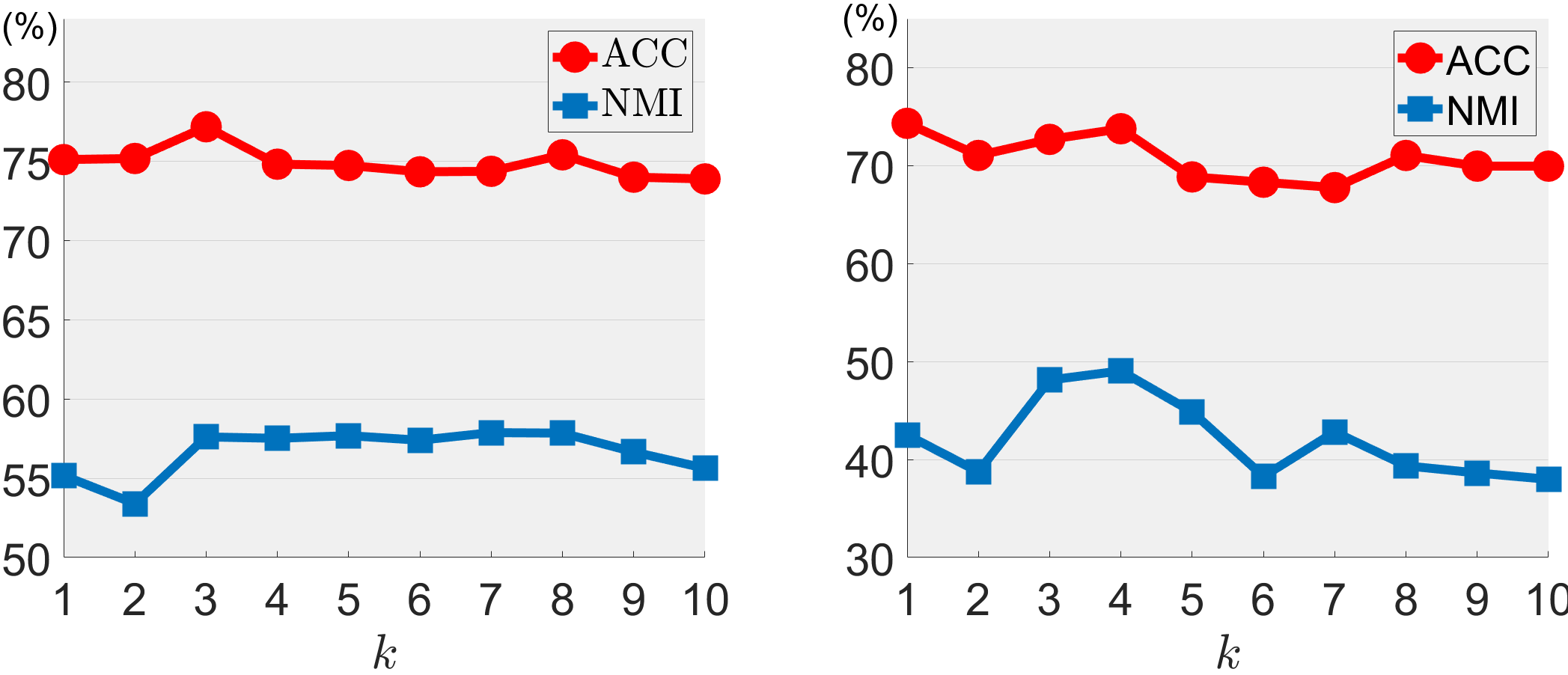} 
\caption{Sensitivity analysis of $k$ on Cora (left) and Texas (right).}
\label{parak}
\end{figure}

\begin{figure}[t]
\centering
\includegraphics[width=0.5\textwidth]{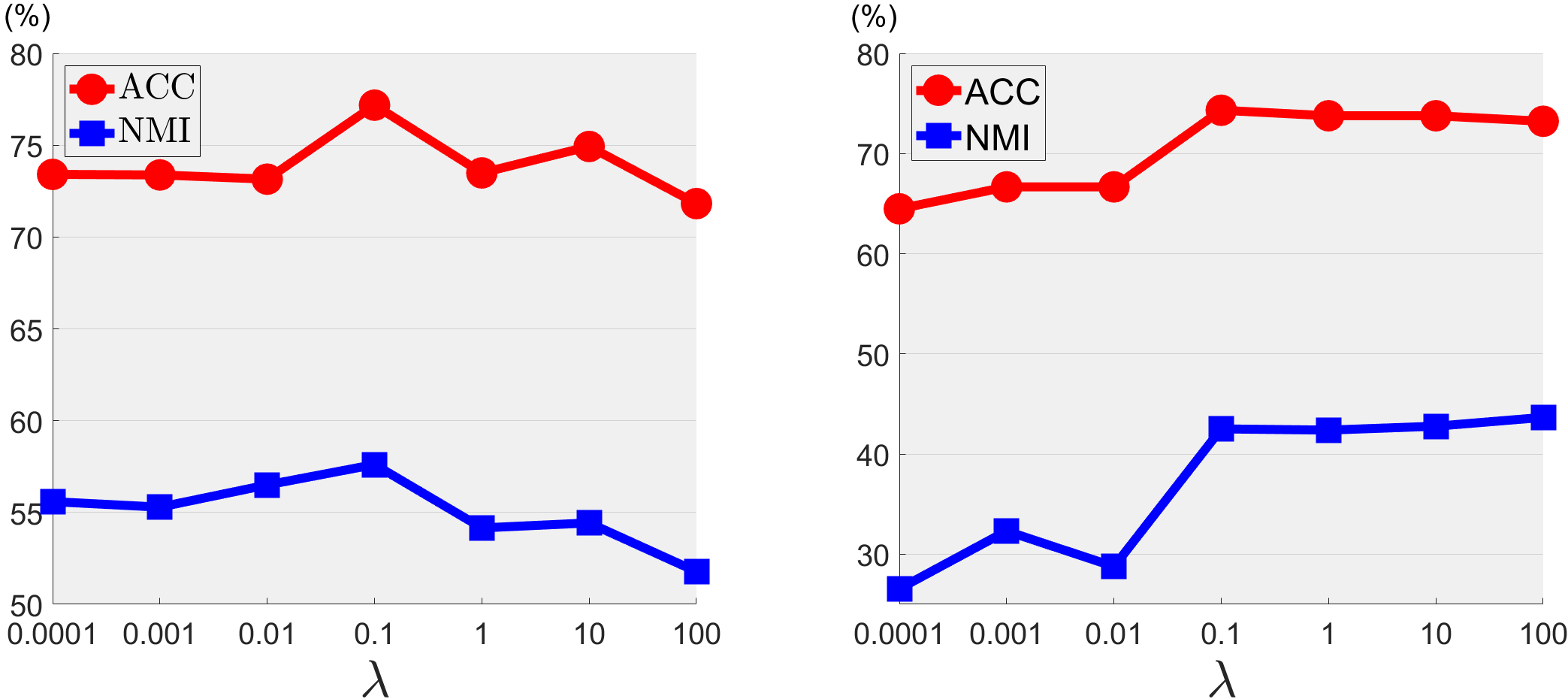} 
\caption{Sensitivity analysis of $\lambda$ on Cora (left) and Texas (right).}
\label{lams}
\end{figure}

\begin{table}[h]
\centering
\caption{Results of ablation study. The best performance is marked in \textbf{bold}.}\label{abla1}
\begin{center}
\resizebox{.4\textwidth}{!}{
\begin{tabular}{l rl| rl| rl}
\midrule
Methods & \multicolumn{2}{c}{{AMLP w $A$}} & \multicolumn{2}{c}{{AMLP w/o $L_{agg}$}} & \multicolumn{2}{c}{{AMLP}} \\ 
\cmidrule(lr){2-3} \cmidrule(lr){4-5} \cmidrule(lr){6-7} 
    & {ACC} & {NMI} & {ACC} & {NMI}& {ACC} & {NMI}\\
\midrule
Cora
&75.89 &\textbf{59.43}
&71.86 &50.91
&\textbf{77.18} &57.61
\\
UAT
&57.19 &27.67
&51.43 &25.16
&\textbf{58.40} &\textbf{28.79}
 \\
Texas
&57.38 &29.54
&73.22 &41.86
&\textbf{74.32} &\textbf{42.51}
\\
Washington
&62.61 &39.40
&69.11 &35.12
&\textbf{74.35} &\textbf{46.74}
\\
\bottomrule
\end{tabular}}
\end{center}
\end{table}

\begin{table}[]
\centering
\caption{Analysis of time (seconds) and GPU memory cost (GB) on Cora. ``-'' indicates running on CPU. }
\begin{center}
\resizebox{0.4\textwidth}{!}{
\begin{tabular}{lccccc}
\hline Methods & AGE & CGC & DGCN & DyFSS & AMLP \\
\hline Time & 48.72 & 134.26 & 97.42 & 55.33 & \textbf{21.91} \\
Memory & 1.42 & - & 1.18 & 2.23 & \textbf{0.78} \\
\hline
\end{tabular}}
    \label{time}
\end{center}
\end{table}

\section{Experiments On Node Classification}
To further verify the effectiveness of our proposed AMLP, we examine its performance on the node classification task. We use the node representations $\hat{Y}$ learned through our unsupervised methods to train a linear classifier. We compare our method with aggregation-based and self-supervised methods.

\subsection{Classification Baselines}

\textbf{Aggregation-based methods:} We compare AMLP with six state-of-the-art supervised methods that adjust the aggregation mechanism, including GPRGNN \cite{GPRGNN}, GGCN \cite{GGCN}, RAW-GNN \cite{Raw-gnn}, GloGNN++\cite{Global-hete},  CAGNN \cite{CAGNN}, and PCNet \cite{PCNet}. They adjust the aggregation mechanism since a single fixed aggregation method often fails to effectively address heterophily in graph data. For instance, GPRGNN learns PageRank weights adaptively, balancing the use of node features and topological information. GGCN employs structure- and feature-based edge corrections to send signed neighbor features under specific node degree constraints. RAW-GNN captures homophily through breadth-first random walks and heterophily through depth-first searches, replacing traditional neighborhoods with path-based ones and utilizing Recurrent Neural Networks for aggregation. GloGNN++ introduces extra parameters to balance local and global neighborhoods, thereby enlarging the receptive field. CAGNN proposes a neighbor effect for each node, allowing for better aggregation of inter-class edges through adaptive mixing of discriminative and neighbor features. Lastly, PCNet proposes a two-fold filtering with both homophilic and heterophilic aggregation, which can be approximated by the Poisson-Charlier polynomials. Notably, CAGNN can be integrated into any existing message-passing GNNs, which aligns closely with our method.

\textbf{Self-supervised methods:} We compare AMLP with five state-of-the-art self-supervised learning methods, including GRACE \cite{GRACE}, MVGRL \cite{MVGRL}, BRGL \cite{BGRL}, SELENE \cite{SELENE}, and HeterGCL \cite{HeterGCL}. GRACE designs a simple node-level contrastive learning approach. MVGRL maximizes the mutual information between representations from different structural views of graphs. BGRL uses invariance regularization to perform self-supervised representation learning without negative samples. SELENE incorporates a dual-channel embedding pipeline, r-ego network sampling, anonymization, and a negative-sample-free self-supervised learning objective to improve node representations. HeterGCL introduces an adaptive neighbor contrastive loss to capture local and global structural signals.

\subsection{Experimental Settings}
The experimental settings for the aggregation-based methods follow \cite{Global-hete}. The training/validation/test splits are in a ratio of 48\%/32\%/20\% and there are 10 different random splits. The experimental settings for the self-supervised methods follow \cite{HeterGCL}. The training/validation/test splits are in a ratio of 10\%/10\%/80\%.

\subsection{Classification Results}
Table \ref{semi} presents the classification results of aggregation-based methods. Notably, AMLP outperforms supervised GNNs in most cases. While it does not achieve the highest performance on the Pubmed dataset, its results remain competitive. On the Texas dataset, AMLP demonstrates a significant advantage over the runner-up, showcasing its effectiveness. Unlike the baseline methods that rely on complex aggregation strategies driven by labels, AMLP leverages a single-layer MLP to adapt flexibly to varying graph structures. This simple design not only enhances computational efficiency but also improves adaptability, enabling AMLP to deliver more accurate representations and superior performance across diverse datasets.

Table \ref{semi2} presents the classification results for self-supervised learning methods, where AMLP consistently outperforms all baseline models. Many baseline approaches focus on designing intricate contrastive loss functions. For instance, HeterGCL employs an adaptive neighbor contrastive loss to capture structural information from local to global perspectives. In contrast, AMLP adopts a simpler approach by modifying the loss function within a generative framework. The experimental results demonstrate that AMLP achieves better scalability and generalization while maintaining competitive performance, further underscoring the effectiveness of our proposed method.

\begin{table}[t]
\centering
\caption{Node classification accuracy  (\%) with aggregation-based methods. The best performance is marked in \textbf{bold}.}
\vspace{10pt}
\label{semi}
\resizebox{0.4\textwidth}{!}{
\begin{tabular}{@{}lllll lll@{}}
\toprule
Dataset  & Citeseer &Pubmed & Texas & Cornell\\ \midrule

    GPRGNN     
    & 77.13$_{\pm\text{1.67}} $
    & 87.54$_{\pm\text{0.38}} $
    & 78.38$_{\pm\text{4.36}} $        
    & 80.27$_{\pm\text{8.11}} $ 

  \\ 
    GGCN      
    &77.14$_{\pm\text{1.45}} $
    &89.15$_{\pm\text{0.37}} $
    &84.86$_{\pm\text{4.55}} $
    &85.68$_{\pm\text{6.63}} $

  \\
  RAW-GNN
  &75.38$_{\pm\text{1.68}} $
  &89.34$_{\pm\text{0.66}} $
  &85.95$_{\pm\text{4.15}} $
  &84.86$_{\pm\text{5.43}} $
  \\

      GloGNN++   
      & 77.22$_{\pm\text{1.78}} $ 
      & 89.24$_{\pm\text{0.39}} $
      & 84.05$_{\pm\text{4.90}}$       
      &85.95$_{\pm\text{5.10}}$
  \\
      CAGNN
      &76.03$_{\pm\text{1.16}} $
      &\textbf{89.74$_{\pm\text{0.55}} $}
      &85.13$_{\pm\text{5.73}}$       
      &81.35$_{\pm\text{5.47}}$
      
  \\
      PCNet
      &77.50$_{\pm\text{1.06}} $
      &89.51$_{\pm\text{0.38}} $
      &88.11$_{\pm\text{2.17}}$       
      &82.16$_{\pm\text{2.70}}$

   \\  \midrule
       AMLP 
        &\textbf{77.58$_{\pm\text{1.08}}$}
        &89.10$_{\pm\text{0.54}}$
        &\textbf{91.33$_{\pm\text{3.74}}$}
        &\textbf{86.05$_{\pm\text{5.16}}$}

\\
\bottomrule

\end{tabular}}
\end{table}

\begin{table}[t]
\centering
\caption{Node classification accuracy  (\%) with self-supervised methods. The best performance is marked in \textbf{bold}.}
\vspace{10pt}
\label{semi2}
\resizebox{0.4\textwidth}{!}{
\begin{tabular}{@{}lllll lll@{}}
\toprule
Dataset  & Citeseer &Pubmed & Texas & Cornell\\ \midrule

    GRACE     
    & 71.4$_{\pm\text{1.0}} $
    & 77.6$_{\pm\text{1.0}} $ 
    & 63.5$_{\pm\text{2.6}} $        
    & 56.4$_{\pm\text{2.1}} $ 

  \\ 
    MVGRL      
    & 72.3$_{\pm\text{0.7}} $   
    & 80.1$_{\pm\text{0.7}} $
    & 61.7$_{\pm\text{3.9}} $        

    & 56.2$_{\pm\text{2.4}} $

  \\
  BGRL
  &72.3$_{\pm\text{0.6}} $
  &84.7$_{\pm\text{0.4}} $
  &65.8$_{\pm\text{2.7}} $
  &56.7$_{\pm\text{2.1}} $
  \\

      SELENE   
      & 54.1$_{\pm\text{1.1}} $ 
      & 81.7$_{\pm\text{0.3}} $
      & 64.0$_{\pm\text{1.7}}$       
      &56.1$_{\pm\text{2.5}}$
  \\
      HeterGCL
      &73.0$_{\pm\text{0.6}} $
      &86.2$_{\pm\text{0.2}} $
      &74.7$_{\pm\text{3.6}}$       
      &75.5$_{\pm\text{2.8}}$

   \\  \midrule
       AMLP 
        &\textbf{73.5$_{\pm\text{0.6}}$}
        &\textbf{86.4$_{\pm\text{0.3}}$}
        &\textbf{77.8$_{\pm\text{1.5}}$}
        &\textbf{75.8$_{\pm\text{2.2}}$}

\\
\bottomrule

\end{tabular}}
\end{table}
\section{Conclusion}
This paper introduces a novel perspective on the learning paradigm of aggregation, positing that MLPs can be aggregation-adaptive to enhance the capacity of neural networks. This adaptability is achieved through the implementation of an aggregation-aware loss. Both theoretical analyses and empirical results affirm AMLP's capability to capture structural disparities and generate more discriminative node representations. This represents a significant advancement in unsupervised graph representation learning and paves the way for further research into message-passing mechanisms. Comprehensive experiments conducted on both homophilic and heterophilic graphs demonstrate that AMLP consistently surpasses existing baselines in clustering and classification tasks, showcasing its robustness and versatility across a wide range of graph structures.

\bibliography{example_paper}

\begin{thebibliography}{10}
\providecommand{\url}[1]{#1}
\csname url@samestyle\endcsname
\providecommand{\newblock}{\relax}
\providecommand{\bibinfo}[2]{#2}
\providecommand{\BIBentrySTDinterwordspacing}{\spaceskip=0pt\relax}
\providecommand{\BIBentryALTinterwordstretchfactor}{4}
\providecommand{\BIBentryALTinterwordspacing}{\spaceskip=\fontdimen2\font plus
\BIBentryALTinterwordstretchfactor\fontdimen3\font minus \fontdimen4\font\relax}
\providecommand{\BIBforeignlanguage}[2]{{%
\expandafter\ifx\csname l@#1\endcsname\relax
\typeout{** WARNING: IEEEtran.bst: No hyphenation pattern has been}%
\typeout{** loaded for the language `#1'. Using the pattern for}%
\typeout{** the default language instead.}%
\else
\language=\csname l@#1\endcsname
\fi
#2}}
\providecommand{\BIBdecl}{\relax}
\BIBdecl

\bibitem{GCN}
M.~Welling and T.~N. Kipf, ``Semi-supervised classification with graph convolutional networks,'' in \emph{J. International Conference on Learning Representations}, 2017.

\bibitem{xiao2024comprehensive}
L.~Xiao, S.~Xiao, Y.~He, J.~Dai, Y.~Wang, and Y.~Li, ``Comprehensive study on zeroing neural network with high-order evolutionary formula, nonlinear functions, and variable parameter for time-changing matrix cholesky decomposition,'' \emph{IEEE Transactions on Systems, Man, and Cybernetics: Systems}, 2024.

\bibitem{zhang2024nie}
Y.~Zhang, Y.~Zhang, D.~Yan, Q.~He, and Y.~Yang, ``Nie-gcn: Neighbor item embedding-aware graph convolutional network for recommendation,'' \emph{IEEE Transactions on Systems, Man, and Cybernetics: Systems}, vol.~54, no.~5, pp. 2810--2821, 2024.

\bibitem{CGC}
X.~Xie, W.~Chen, Z.~Kang, and C.~Peng, ``Contrastive graph clustering with adaptive filter,'' \emph{Expert Systems with Applications}, vol. 219, p. 119645, 2023.

\bibitem{wang2023generalizing}
B.~Wang, B.~Jiang, J.~Tang, and B.~Luo, ``Generalizing aggregation functions in gnns: building high capacity and robust gnns via nonlinear aggregation,'' \emph{IEEE Transactions on Pattern Analysis and Machine Intelligence}, 2023.

\bibitem{Raw-gnn}
D.~Jin, R.~Wang, M.~Ge, D.~He, X.~Li, W.~Lin, and W.~Zhang, ``Raw-gnn: Random walk aggregation based graph neural network,'' in \emph{IJCAI}, 2022, pp. 2108--2114.

\bibitem{fang2025contributes}
J.~Fang, H.~Yang, J.~Wu, Z.~Zheng, and C.~K. Tse, ``What contributes more to the robustness of heterophilic graph neural networks?'' \emph{IEEE Transactions on Systems, Man, and Cybernetics: Systems}, 2025.

\bibitem{he2022block}
D.~He, C.~Liang, H.~Liu, M.~Wen, P.~Jiao, and Z.~Feng, ``Block modeling-guided graph convolutional neural networks,'' in \emph{Proceedings of the AAAI conference on artificial intelligence}, vol.~36, no.~4, 2022, pp. 4022--4029.

\bibitem{jin2021heterogeneous}
D.~Jin, C.~Huo, C.~Liang, and L.~Yang, ``Heterogeneous graph neural network via attribute completion,'' in \emph{Proceedings of the web conference 2021}, 2021, pp. 391--400.

\bibitem{Global-hete}
X.~Li, R.~Zhu, Y.~Cheng, C.~Shan, S.~Luo, D.~Li, and W.~Qian, ``Finding global homophily in graph neural networks when meeting heterophily,'' in \emph{International Conference on Machine Learning}.\hskip 1em plus 0.5em minus 0.4em\relax PMLR, 2022.

\bibitem{GPRGNN}
E.~Chien, J.~Peng, P.~Li, and O.~Milenkovic, ``Adaptive universal generalized pagerank graph neural network,'' in \emph{International Conference on Learning Representations}, 2021.

\bibitem{park2022deformable}
J.~Park, S.~Yoo, J.~Park, and H.~J. Kim, ``Deformable graph convolutional networks,'' in \emph{Proceedings of the AAAI Conference on Artificial Intelligence}, vol.~36, no.~7, 2022, pp. 7949--7956.

\bibitem{yang2022graph}
T.~Yang, Y.~Wang, Z.~Yue, Y.~Yang, Y.~Tong, and J.~Bai, ``Graph pointer neural networks,'' in \emph{Proceedings of the AAAI conference on artificial intelligence}, vol.~36, no.~8, 2022, pp. 8832--8839.

\bibitem{jiang2020identifying}
H.~Jiang and O.~Nachum, ``Identifying and correcting label bias in machine learning,'' in \emph{International conference on artificial intelligence and statistics}.\hskip 1em plus 0.5em minus 0.4em\relax PMLR, 2020, pp. 702--712.

\bibitem{zhang2020fairness}
T.~Zhang, T.~Zhu, J.~Li, M.~Han, W.~Zhou, and S.~Y. Philip, ``Fairness in semi-supervised learning: Unlabeled data help to reduce discrimination,'' \emph{IEEE Transactions on Knowledge and Data Engineering}, vol.~34, no.~4, pp. 1763--1774, 2020.

\bibitem{FAGCN}
D.~Bo, X.~Wang, C.~Shi, and H.~Shen, ``Beyond low-frequency information in graph convolutional networks,'' in \emph{Proceedings of the AAAI conference on artificial intelligence}, vol.~35, no.~5, 2021, pp. 3950--3957.

\bibitem{GGCN}
Y.~Yan, M.~Hashemi, K.~Swersky, Y.~Yang, and D.~Koutra, ``Two sides of the same coin: Heterophily and oversmoothing in graph convolutional neural networks,'' in \emph{2022 IEEE International Conference on Data Mining (ICDM)}.\hskip 1em plus 0.5em minus 0.4em\relax IEEE, 2022, pp. 1287--1292.

\bibitem{EvenNet}
R.~Lei, Z.~Wang, Y.~Li, B.~Ding, and Z.~Wei, ``Evennet: Ignoring odd-hop neighbors improves robustness of graph neural networks,'' \emph{Advances in Neural Information Processing Systems}, 2022.

\bibitem{deepergcn}
G.~Li, C.~Xiong, G.~Qian, A.~Thabet, and B.~Ghanem, ``Deepergcn: Training deeper gcns with generalized aggregation functions,'' \emph{IEEE Transactions on Pattern Analysis and Machine Intelligence}, 2023.

\bibitem{hamilton2017inductive}
W.~Hamilton, Z.~Ying, and J.~Leskovec, ``Inductive representation learning on large graphs,'' \emph{Advances in neural information processing systems}, vol.~30, 2017.

\bibitem{MCGC}
E.~Pan and Z.~Kang, ``Multi-view contrastive graph clustering,'' \emph{Advances in neural information processing systems}, vol.~34, pp. 2148--2159, 2021.

\bibitem{DGCN}
------, ``Beyond homophily: Reconstructing structure for graph-agnostic clustering,'' in \emph{International Conference on Machine Learning}.\hskip 1em plus 0.5em minus 0.4em\relax PMLR, 2023, pp. 26\,868--26\,877.

\bibitem{liu2024graph}
Z.~Liu, C.~Zeng, and G.~Zheng, ``Graph data condensation via self-expressive graph structure reconstruction,'' in \emph{Proceedings of the 30th ACM SIGKDD Conference on Knowledge Discovery and Data Mining}, 2024, pp. 1992--2002.

\bibitem{li2019deepgcns}
G.~Li, M.~Muller, A.~Thabet, and B.~Ghanem, ``Deepgcns: Can gcns go as deep as cnns?'' in \emph{Proceedings of the IEEE/CVF international conference on computer vision}, 2019, pp. 9267--9276.

\bibitem{GEKDD}
X.~Li, B.~Kao, C.~Shan, D.~Yin, and M.~Ester, ``{CAST:} {A} correlation-based adaptive spectralacm clustering algorithm on multi-scale data,'' in \emph{The 26th {ACM} {SIGKDD} Conference on Knowledge Discovery and Data Mining}.\hskip 1em plus 0.5em minus 0.4em\relax {ACM}, 2020, pp. 439--449.

\bibitem{CDC}
Z.~Kang, X.~Xie, B.~Li, and E.~Pan, ``Cdc: A simple framework for complex data clustering,'' \emph{IEEE Transactions on Neural Networks and Learning Systems}, 2024.

\bibitem{rozemberczki2021multi}
B.~Rozemberczki, C.~Allen, and R.~Sarkar, ``Multi-scale attributed node embedding,'' \emph{Journal of Complex Networks}, vol.~9, no.~2, p. cnab014, 2021.

\bibitem{Roman}
O.~Platonov, D.~Kuznedelev, M.~Diskin, A.~Babenko, and L.~Prokhorenkova, ``A critical look at the evaluation of gnns under heterophily: are we really making progress?'' in \emph{The Eleventh International Conference on Learning Representations}, 2023.

\bibitem{DCRN}
Y.~Liu, W.~Tu, S.~Zhou, X.~Liu, L.~Song, X.~Yang, and E.~Zhu, ``Deep graph clustering via dual correlation reduction,'' in \emph{Proc. of AAAI}, 2022.

\bibitem{EAT}
N.~Mrabah, M.~Bouguessa, M.~F. Touati, and R.~Ksantini, ``Rethinking graph auto-encoder models for attributed graph clustering,'' \emph{IEEE Transactions on Knowledge and Data Engineering}, 2022.

\bibitem{hu2020open}
W.~Hu, M.~Fey, M.~Zitnik, Y.~Dong, H.~Ren, B.~Liu, M.~Catasta, and J.~Leskovec, ``Open graph benchmark: Datasets for machine learning on graphs,'' \emph{Advances in neural information processing systems}, vol.~33, pp. 22\,118--22\,133, 2020.

\bibitem{Geom-GCN}
H.~Pei, B.~Wei, K.~C.-C. Chang, Y.~Lei, and B.~Yang, ``Geom-gcn: Geometric graph convolutional networks,'' in \emph{International Conference on Learning Representations}, 2020.

\bibitem{luan2024heterophily}
S.~Luan, Q.~Lu, C.~Hua, X.~Wang, J.~Zhu, X.-W. Chang, G.~Wolf, and J.~Tang, ``Are heterophily-specific gnns and homophily metrics really effective? evaluation pitfalls and new benchmarks,'' \emph{arXiv preprint arXiv:2409.05755}, 2024.

\bibitem{DAEGC}
C.~Wang, S.~Pan, R.~Hu, G.~Long, J.~Jiang, and C.~Zhang, ``Attributed graph clustering: {A} deep attentional embedding approach,'' in \emph{Proceedings of the Twenty-Eighth International Joint Conference on Artificial Intelligence}, 2019, pp. 3670--3676.

\bibitem{MSGA}
T.~Wang, J.~Wu, Z.~Zhang, W.~Zhou, G.~Chen, and S.~Liu, ``Multi-scale graph attention subspace clustering network,'' \emph{Neurocomputing}, vol. 459, pp. 302--314, 2021.

\bibitem{S2GC}
H.~Zhu and P.~Koniusz, ``Simple spectral graph convolution,'' in \emph{9th International Conference on Learning Representations, {ICLR} 2021,}, 2021.

\bibitem{CDRS}
P.~Zhu, J.~Li, Y.~Wang, B.~Xiao, S.~Zhao, and Q.~Hu, ``Collaborative decision-reinforced self-supervision for attributed graph clustering,'' \emph{IEEE Transactions on Neural Networks and Learning Systems}, 2022.

\bibitem{RWR}
P.-Y. Huang, R.~Frederking \emph{et~al.}, ``Rwr-gae: Random walk regularization for graph auto encoders,'' \emph{arXiv preprint arXiv:1908.04003}, 2019.

\bibitem{ARVGA}
S.~Pan, R.~Hu, S.-f. Fung, G.~Long, J.~Jiang, and C.~Zhang, ``Learning graph embedding with adversarial training methods,'' \emph{IEEE transactions on cybernetics}, vol.~50, no.~6, pp. 2475--2487, 2019.

\bibitem{DMGNC}
J.~Yang, J.~Cai, L.~Zhong, Y.~Pi, and S.~Wang, ``Deep masked graph node clustering,'' \emph{IEEE Transactions on Computational Social Systems}, 2024.

\bibitem{DyFSS}
P.~Zhu, Q.~Wang, Y.~Wang, J.~Li, and Q.~Hu, ``Every node is different: Dynamically fusing self-supervised tasks for attributed graph clustering,'' in \emph{Proceedings of the AAAI Conference on Artificial Intelligence}, vol.~38, no.~15, 2024, pp. 17\,184--17\,192.

\bibitem{MVGRL}
K.~Hassani and A.~H. Khasahmadi, ``Contrastive multi-view representation learning on graphs,'' in \emph{International Conference on Machine Learning}.\hskip 1em plus 0.5em minus 0.4em\relax PMLR, 2020, pp. 4116--4126.

\bibitem{SDCN}
D.~Bo, X.~Wang, C.~Shi, M.~Zhu, E.~Lu, and P.~Cui, ``Structural deep clustering network,'' in \emph{Proceedings of The Web Conference 2020}, 2020, pp. 1400--1410.

\bibitem{DFCN}
W.~Tu, S.~Zhou, X.~Liu, X.~Guo, Z.~Cai, E.~Zhu, and J.~Cheng, ``Deep fusion clustering network,'' in \emph{Proceedings of the AAAI Conference on Artificial Intelligence}, vol.~35, no.~11, 2021, pp. 9978--9987.

\bibitem{SCGC}
Y.~Liu, X.~Yang, S.~Zhou, X.~Liu, S.~Wang, K.~Liang, W.~Tu, and L.~Li, ``Simple contrastive graph clustering,'' \emph{IEEE Transactions on Neural Networks and Learning Systems}, 2023.

\bibitem{CCGC}
X.~Yang, Y.~Liu, S.~Zhou, S.~Wang, W.~Tu, Q.~Zheng, X.~Liu, L.~Fang, and E.~Zhu, ``Cluster-guided contrastive graph clustering network,'' in \emph{Proc. of AAAI}, 2023.

\bibitem{AGE}
G.~Cui, J.~Zhou, C.~Yang, and Z.~Liu, ``Adaptive graph encoder for attributed graph embedding,'' in \emph{Proceedings of the 26th ACM SIGKDD International Conference on Knowledge Discovery \& Data Mining}, 2020, pp. 976--985.

\bibitem{FGC}
Z.~Kang, Z.~Liu, S.~Pan, and L.~Tian, ``Fine-grained attributed graph clustering,'' in \emph{Proceedings of the 2022 SIAM International Conference on Data Mining (SDM)}.\hskip 1em plus 0.5em minus 0.4em\relax SIAM, 2022, pp. 370--378.

\bibitem{RGSL}
X.~Xie, Z.~Kang, and W.~Chen, ``Robust graph structure learning under heterophily,'' \emph{Neural Networks}, 2025.

\bibitem{SELENE}
Z.~Zhong, G.~Gonzalez, D.~Grattarola, and J.~Pang, ``Unsupervised network embedding beyond homophily,'' \emph{IEEE Transactions on Machine Learning Research}, 2022.

\bibitem{BGRL}
S.~Thakoor, C.~Tallec, M.~G. Azar, R.~Munos, P.~Veli{\v{c}}kovi{\'c}, and M.~Valko, ``Bootstrapped representation learning on graphs,'' in \emph{ICLR 2021 Workshop on Geometrical and Topological Representation Learning}, 2021.

\bibitem{progcl}
J.~Xia, L.~Wu, G.~Wang, J.~Chen, and S.~Z. Li, ``Progcl: Rethinking hard negative mining in graph contrastive learning,'' in \emph{International Conference on Machine Learning}.\hskip 1em plus 0.5em minus 0.4em\relax PMLR, 2022, pp. 24\,332--24\,346.

\bibitem{S3GC}
F.~Devvrit, A.~Sinha, I.~Dhillon, and P.~Jain, ``S3gc: scalable self-supervised graph clustering,'' \emph{Advances in Neural Information Processing Systems}, vol.~35, pp. 3248--3261, 2022.

\bibitem{Dink-net}
Y.~Liu, K.~Liang, J.~Xia, S.~Zhou, X.~Yang, X.~Liu, and S.~Z. Li, ``Dink-net: Neural clustering on large graphs,'' in \emph{International Conference on Machine Learning}.\hskip 1em plus 0.5em minus 0.4em\relax PMLR, 2023, pp. 21\,794--21\,812.

\bibitem{CAGNN}
J.~Chen, S.~Chen, J.~Gao, Z.~Huang, J.~Zhang, and J.~Pu, ``Exploiting neighbor effect: Conv-agnostic gnn framework for graphs with heterophily,'' \emph{IEEE Transactions on Neural Networks and Learning Systems}, 2023.

\bibitem{PCNet}
B.~Li, E.~Pan, and Z.~Kang, ``Pc-conv: Unifying homophily and heterophily with two-fold filtering,'' in \emph{Proceedings of the AAAI Conference on Artificial Intelligence}, vol.~38, no.~12, 2024, pp. 13\,437--13\,445.

\bibitem{GRACE}
Y.~Zhu, Y.~Xu, F.~Yu, Q.~Liu, S.~Wu, and L.~Wang, ``{Deep Graph Contrastive Representation Learning},'' in \emph{ICML Workshop on Graph Representation Learning and Beyond}, 2020.

\bibitem{HeterGCL}
C.~Wang, Y.~Liu, Y.~Yang, and W.~Li, ``Hetergcl: graph contrastive learning framework on heterophilic graph,'' in \emph{Proceedings of the Thirty-Third International Joint Conference on Artificial Intelligence}, 2024, pp. 2397--2405.

\end{thebibliography}
\bibliographystyle{IEEEtran}


\end{document}